\newif\ifpreprint
\def\eqref#1{equation~\ref{#1}}
\def\1{\bm{1}}
\def\vx{{X}}
\def\vy{{Y}}
\DeclareMathAlphabet{\mathsfit}{\encodingdefault}{\sfdefault}{m}{sl}
\SetMathAlphabet{\mathsfit}{bold}{\encodingdefault}{\sfdefault}{bx}{n}
\def\gE{{\mathcal{E}}}
\def\gF{{\mathcal{F}}}
\def\gI{{\mathcal{I}}}
\def\gL{{\mathcal{L}}}
\def\gN{{\mathcal{N}}}
\def\gO{{\mathcal{O}}}
\def\sP{{\mathbb{P}}}
\def\sQ{{\mathbb{Q}}}
\newcommand{\E}{\mathbb{E}}
\newcommand{\R}{\mathbb{R}}
\newcommand{\KL}{D_{\mathrm{KL}}}
\DeclareMathOperator{\Tr}{Tr}
\DeclareMathOperator{\tr}{tr}
\newcommand{\partt}[1]{\partial_t #1}
\newcommand{\parts}[1]{\partial_s #1}
\newcommand{\dt}[1]{\frac{\mathrm{d} #1}{\mathrm{d} t}}
\newcommand{\partxj}[1]{\partial_{{x}_{#1}}}
\newcommand{\partxxij}[2]{\partial_{{x}_{#1}, {x}_{#2}}^2}
\newcommand*{\eg}{{\it e.g.}\@\xspace}
\newcommand*{\ie}{{\it i.e.}\@\xspace}
\def\sc{\mathsfit{s}_\theta}
\def\esm{\gL_\text{ESM}}
\def\ism{\gL_\text{ISM}}
\def\ssm{\gL_\text{SSM}}
\def\dsm{\gL_\text{DSM}}
\def\ds{\,\mathrm{d}s}
\def\dt{\,\mathrm{d}t}
\declaretheoremstyle[
bodyfont=\normalfont
]{normalstyle}
\pgfplotsset{compat=newest}
\title{A Variational Perspective on Diffusion-Based Generative Models and Score Matching}
\author{%
  Chin-Wei Huang,\,\, Jae Hyun Lim,\,\, Aaron Courville \\
  University of Montreal \& Mila \\
  \texttt{\{chin-wei.huang, jae.hyun.lim, aaron.courville\}@umontreal.ca} \\
}
\begin{document}

\maketitle

\begin{abstract}
    Discrete-time diffusion-based generative models and score matching methods have shown promising results in modeling high-dimensional image data.
    Recently, \citet{song2021score} show that diffusion processes that transform data into noise can be reversed via learning the score function, \ie the gradient of the log-density of the perturbed data.
    They propose to plug the learned score function into an inverse formula to define a generative diffusion process. 
    Despite the empirical success, a theoretical underpinning of this procedure is still lacking.
    In this work, we approach the (continuous-time) generative diffusion directly and derive a variational framework for likelihood estimation,
    which includes continuous-time normalizing flows as a special case, and can be seen as an infinitely deep variational autoencoder.
    Under this framework, we show that minimizing the score-matching loss is equivalent to maximizing a lower bound of the likelihood of the plug-in reverse SDE proposed by \citet{song2021score}, bridging the theoretical gap. 
\end{abstract}

\section{Introduction}
Generative modeling can be thought of as inverting an inference process. 
If the inference process is invertible, then one can focus on transforming the data into a tractable distribution \citep{dinh2016density}.
If the inference process is deterministic yet non-invertible, one could learn to invert it stochastically \citep{dinh2019rad, nielsen2020survae}. 
Most generally, both inference and generation can be stochastic. 
This is known as the variational autoencoder \citep[VAE]{kingma2014auto, rezende2014stochastic}. 

Under the variational framework, one has a lot of flexibility in choosing the generative and inference models. 
Recent work on diffusion-based modeling \citep{sohl2015deep, ho2020denoising} can be thought of as removing one degree of freedom, by freezing the inference path. 
The inference model is a fixed discrete-time Markov chain, that slowly transforms the data into a tractable prior, such as the standard normal distribution. 
The generative model is another Markov chain that is trained to revert this process iteratively. 
Diffusion-based models have been shown to perform remarkably well on image synthesis \citep{dhariwal2021diffusion}, rivaling the performance of state-of-the-art Generative Adversarial Networks \citep{brock2018large}. 

\citet{song2021score} connect diffusion-based model and \emph{score matching} \citep{hyvarinen2005estimation}, by looking at the stochastic differential equation (SDE) associated with the inference process. 
They realize that the dynamic of the inference process can be inverted if one has access to the score function of the perturbed data, by solving another SDE reversed in time. 
They then propose to learn the score function of the inference process and substitute the approximate score into the formula of the reverse SDE to obtain a generative model.
We call the resulting generative model the plug-in reverse SDE.

Conceptually simple as this learning procedure may seem, little is known about how the score matching loss relates to the plug-in reverse SDE. 
In this paper, we propose a variational framework suitable for likelihood estimation for general generative diffusion processes, and use this framework to connect score matching with maximum likelihood.
We do so by combining two important theorems in stochastic calculus: the \emph{Feynman-Kac formula} for representing the marginal density of the generative diffusion as an expectation (Section \ref{sec:density}), and the \emph{Girsanov theorem} for performing inference in function space (Section \ref{sec:inference}).
We derive a functional evidence lower bound that consistently extends discrete-time diffusion models to have infinite depth, \ie the number of layers goes to infinity (Section \ref{sec:infvae}). 
Finally, by reparameterizing our generative and inference SDEs, we obtain a training objective equivalent to minimizing the (implicit) score matching loss (Section \ref{sec:score}). 
Our theory suggests that by matching the score, one actually maximizes a lower bound on the log marginal density of the plug-in reverse SDE, laying a theoretical foundation for this learning procedure.
We further generalize our result to a family of \emph{marginal-equivalent} plug-in reverse SDEs, including an equivalent ODE as a limiting case.

\begin{figure}
    \centering
    \includegraphics[width=.32\linewidth]{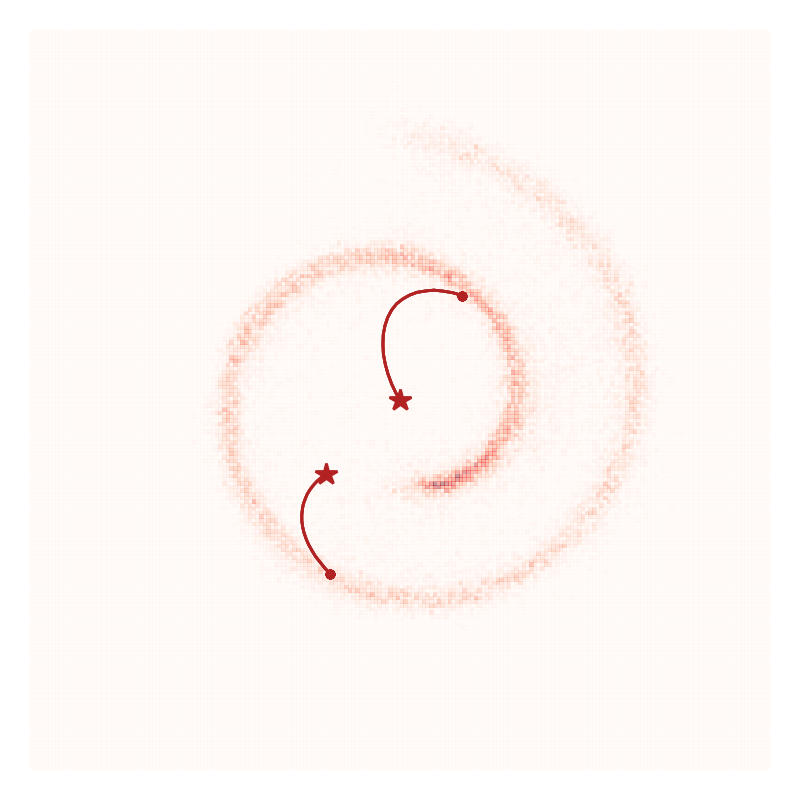}
    \includegraphics[width=.32\linewidth]{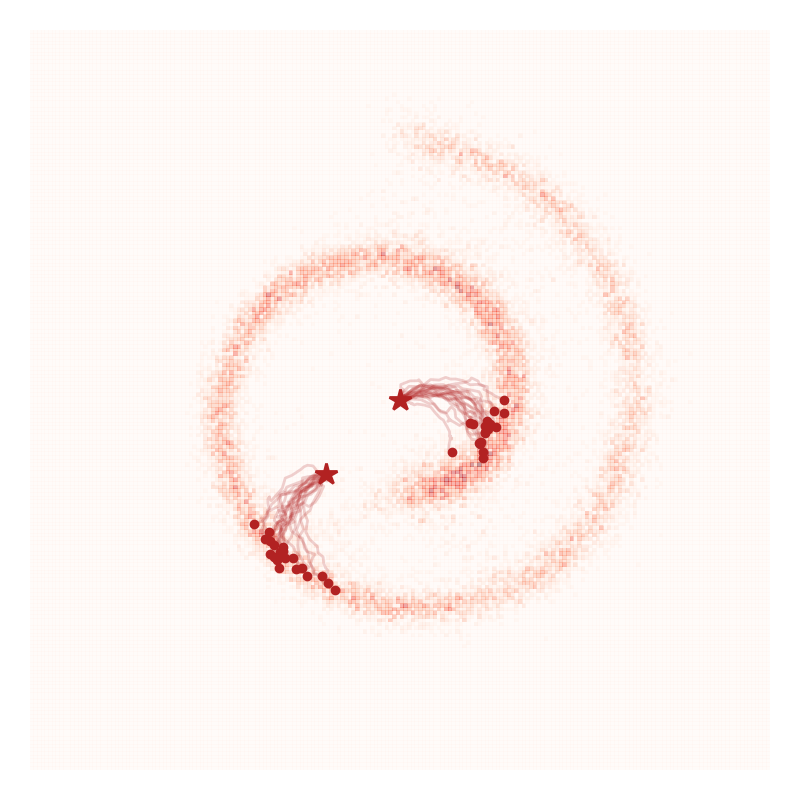}
    \includegraphics[width=.32\linewidth]{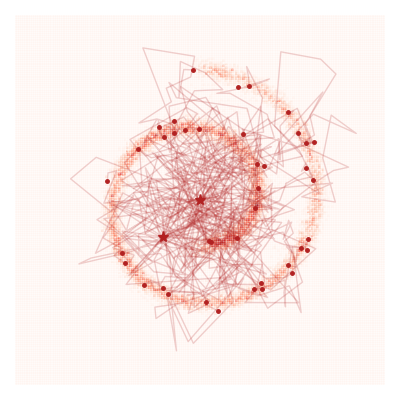}
    \caption{
    Three special cases of generative SDEs. 
    The stars indicate the initial values, followed by some random sample paths.
    \emph{Left}: trained with no diffusion $\sigma=0$ (\ie neural ODE).
    \emph{Middle}: trained with some fixed diffusion $\sigma>0$.
    \emph{Right}: trained with a fixed inference process, $f$ and $g$ (\ie the plug-in reverse SDE).
    }
    \label{fig:generative_sde}
\end{figure}

\paragraph{Notation: }
We use $(\vy_s, s)$ to denote the inference process (where $\vy_0$ is the data), and $(\vx_t, t)$ to denote the generative process (where $\vx_0$ is a random variable following an unstructured prior). 
We use $s$ and $t$ to distinguish the two directions, and always integrate the differential equations from $0$ to $T>0$ (different from the literature, where sometimes one might see integration from $T$ to $0$).
$\hat{B}_s$ and $B_t$ denote the Brownian motions associated with the inference and generative SDEs, respectively. 
$B_s'$ is a reparameterization of $\hat{B}_s$ (see Section \ref{sec:inference}). 
$q(y, s)$ and $p(x, t)$ denote the probability density functions of $\vy_s$ and $\vx_t$, respectively.
We let $\sc$ denote a time-indexed parameterized function that will be used to approximate the score $\nabla \log q(y, s)$.
$\nabla$ is the gradient wrt the spatial variable ($x$ or $y$, which we sometimes call position), $\partt{ }$, $\parts{ }$ and $\partxj{i}$ are partial derivatives, and $H_{*}$ denotes Hessian.

\section{Background}
\label{sec:background}
Assume $\vy_0$ 
follows the data distribution $q(y,0)$, and $\vy_s$ satisfies the It\^{o} SDE \citep{oksendal2003stochastic} 
\begin{align}
    \mathrm{d}\vy = f(\vy, s) \ds + g(\vy, s) \,\mathrm{d}{\hat{B}}_s,
    \label{eq:fixed-inference-sde}
\end{align}
where $f$ and $g$ are chosen such that the density $q(y,s)$ will converge to some tractable prior $p_0$ as $s\rightarrow T$. 
Following \citet{song2021score}, we assume $g$ is position-independent.
It is possible to find a ``reverse'' SDE, whose marginal density evolves according to $q(y,s)$, reversed in time, for example\footnote{See Appendix {\ref{app:equiv}} for a family of equivalent (reverse) SDEs indexed by some parameter $\lambda$, of which equation (\ref{eq:reverse-sde}) is a special case with $\lambda=0$. }
\begin{align}
    \mathrm{d}\vx = (gg^\top \nabla \log q(\vx, T-t) - f) \dt + g \,\mathrm{d}B_t.
    \label{eq:reverse-sde}
\end{align}
If $\vx_0\sim p_0$, then the density $p(x, t)$ of $\vx_t$ is equal to $q(x, T-t)$.
This means that if we have access to the score function $\nabla \log q$, we can solve the above SDE to obtain $\vx_T\overset{d}{=}\vy_0$. 
\citet{song2021score} propose to approximate the score via a parameterized score function $\sc$ by minimizing 
\begin{align*}
    \int_0^T \E_{\vy_s}\left[\frac{1}{2}||\sc(\vy_s , s) - \nabla \log q(\vy_s, s) ||_{\Lambda(s)}^2\right] \, \mathrm{d}s
\end{align*}
where the expectation in the integral is known as the explicit score matching (ESM) loss $\esm$, and $\Lambda(s)$ is a positive definite matrix\footnote{We use this matrix to induce a Mahalanobis norm $||x||_{\Lambda}^2 := x^\top \Lambda x$, which will be used in Section \ref{sec:score}.} that serves as a weighting function for the overall loss.
$\esm$ is not immediately useful, since we do not have access to the ground truth score $\nabla \log q$. 
A few alternative losses can be used, which are all equal to one another up to a constant, including implicit score matching \citep[ISM]{hyvarinen2005estimation}, sliced score matching \citep[SSM]{song2020sliced}, and denoising score matching \citep[DSM]{vincent2011connection}. 
The losses are summarized in Table \ref{tab:score-matching-losses}, and are related through the following identity (see Appendix \ref{app:score-matching} for the derivation):
\begin{align}
\esm - \frac{1}{2} \gI(q(y_s,s)) 
= \ism = \ssm = \dsm - \frac{1}{2}\E_{\vy_0}[ \gI(q(y_s|y_0))
],
\label{eq:sm-identity}
\end{align}
where %
$\gI(q) = \E[||\nabla \log q||_\Lambda^2]$ is a constant. 
After training, \citet{song2021score} plug $\sc$ into (\ref{eq:reverse-sde}) to define a generative model. 
We refer to this SDE as the \emph{plug-in reverse SDE}.
The plug-in reverse SDE has been demonstrated to have impressive empirical results, but a theoretical underpinning of this learning framework is still lacking. 
For example, it is unclear how the training objective (minimizing the score matching loss) relates to the sampling procedure, \eg whether the probability distribution induced by the plug-in reverse SDE gets closer to the data distribution in the sense of any statistical divergence or metric. 
We seek to answer the following question in this paper:
\emph{How will minimizing the score-matching loss impact the plug-in reverse SDE?} 
We first provide a framework to estimate the likelihood of generative SDEs, and then get back to this question in Section \ref{sec:score}.

\begin{table*}[]
    \centering
    \begin{minipage}[c]{0.50\textwidth}
    \centering
    \begin{tabular}{cc}
     \toprule
     Method & Loss \\
     \midrule 
     $\esm$ & $\frac{1}{2}\E[||\sc(\vy_s, s)-\nabla \log q(\vy_s)||_\Lambda^2]$ \\
     $\ism$ & $\E[\frac{1}{2} ||\sc(\vy_s, s)||_\Lambda^2  + \nabla\cdot(\Lambda^\top\sc)]$ \\
     $\ssm$ & $\E[\frac{1}{2} ||\sc(\vy_s, s)||_\Lambda^2  + v^\top\nabla(\Lambda^\top\sc)v]$ \\
     $\dsm$ & $\frac{1}{2}\E[||\sc(\vy_s, s)-\nabla \log q(\vy_s|\vy_0)||_\Lambda^2]$ \\
     \bottomrule
    \end{tabular}
    \caption{
    Score matching losses.
    $v$ follows the Rademacher distribution.
    }
    \label{tab:score-matching-losses}
    \end{minipage}
    \begin{minipage}[c]{0.45\textwidth}
    \centering
    \begin{tabular}{cc}
    \toprule
    F-K & F-P \\
    \midrule 
    $v(y, \varsigma)$ & $p(y, T-\varsigma)$ \\
    \midrule 
    $c(y, \varsigma)$ & $-\nabla \cdot \mu(y, T-\varsigma)$ \\
    $b(y, \varsigma)$ & $-\mu(y, T-\varsigma)$ \\
    $\eta(y,\varsigma)$ & $\sigma(T-\varsigma)$ \\
    $g(y)$ & $p_0(y)$ \\
    \bottomrule
    \end{tabular}
    \caption{Feynman-Kac coefficients.}
    \label{tab:fk-fp}
    \end{minipage}
\end{table*}

\section{Marginal density 
and stochastic instantaneous change of variable
}
\label{sec:density}

Let $\vx_t$ be a diffusion process solving the following It\^{o} SDE\footnote{For generality, we use the notation $\mu$ and $\sigma$ to describe a generative SDE, which will be set to $g^2\sc -f$ and $g$ when we come back to the discussion of the plug-in reverse SDE in Section \ref{sec:score}.}:
\begin{align}
    \mathrm{d}\vx = \mu(\vx, t) \dt+ \sigma(\vx, t) \,\mathrm{d}B_t
    \label{eq:gen-sde}
\end{align}
with the initial condition $\vx_0\sim p_0$, which induces a family of densities $\vx_t\sim p(\cdot, t)$.
We use this SDE as the generative SDE, and we are interested in $\log p(x, T)$ for maximum likelihood.
The density $p(x, t)$ follows the
\emph{Kolmogorov forward} (or the \emph{Fokker Planck}) \emph{equation}:
\begin{align}
    \partt{p(x, t)} = - \sum_j \partxj{j}[\mu_j(x, t)\,p(x, t)] + \sum_{i,j} \partxxij{i}{j}[D_{ij}(x, t)\, p(x, t)]
    \label{eq:fp}
\end{align}
with the initial value $p(\cdot, 0)=p_0(\cdot)$, where $D = \frac{1}{2}\sigma \sigma^T$ is the diffusion matrix. 
We can expand the Fokker Planck and rearrange the terms to obtain
\begin{align}
    \partt{p(x, t)} = 
        &\left[ -\nabla\cdot\mu(x, t) + \sum_{i,j} \partxxij{i}{j} D_{ij}(x, t)  \right] p(x, t) \,\, + \nonumber \\
        &\sum_i\left[ - \mu_i(x, t) + 2 \sum_j \partxj{i} D_{ij}(x, t)\right] \partxj{i} p(x, t)  \,\, + %
        \sum_{i,j} D_{ij}(x,t) \partxxij{i}{j}p(x, t)
    \label{eq:fp-rearranged}
\end{align}
so that all coefficients of the same order are grouped together. 
For simplicity, we assume the diffusion term $\sigma$ is independent of $x$ throughout the paper.
Then (\ref{eq:fp-rearranged}) reduces to
\begin{align}
    \partt{p(x, t)} = 
        -\left(\nabla\cdot\mu(x, t)\right)\,p(x, t) 
        - \mu(x, t)^\top  \nabla p(x, t) 
        + D(t) : H_p(x, t)
    \label{eq:fp-simplified}
\end{align}
where $:$ denotes the Frobenius inner product between matrices. 
Even with this simplification, solving (\ref{eq:fp-simplified}) is not trivial. 
Fortunately, we can estimate this quantity using the \emph{Feynman-Kac formula}, which tells us that the solution of certain second-order linear partial differential equations have a probabilistic representation.

\begin{restatable}[\textbf{Feynman-Kac representation}, Chapter 5.7 of \citet{karatzas2014brownian}]{thm}{feynmankac}
Let $T>0$. 
Let $y$ and $\varsigma$ be the spatial and temporal arguments to the function $v \in C^{2, 1}(\R^d\times [0,T])$ solving
\begin{align}
\partial_{\varsigma}{v} + c v + b^\top \nabla v + A : H_v = 0
\label{eq:fk-pde}
\end{align}
with the terminal condition $v(y, T)=h(y)$, where $A = \frac{1}{2}\eta\eta^\top$ for some matrix-valued function $\eta(y, \varsigma)$. 
Under the assumption stated in Appendix \ref{app:fk-assumption}, if $B_s'$ is a Brownian motion and $\vy_s$ solves
\begin{align}
    \mathrm{d}\vy = b(\vy, s) \ds + \eta(\vy, s) \,\mathrm{d}B_s',
    \label{eq:fk-dynamic}
\end{align}
with the initial datum $\vy_\varsigma = y$, then
\begin{align}
    v(y, \varsigma) = \E\left[h(\vy_T) \exp\left(\int_{\varsigma}^T c(\vy_{s}, s) \ds \right)\, \Bigg\vert \, \vy_{\varsigma} = y \right].
    \label{eq:fk-rep}
\end{align}
\end{restatable}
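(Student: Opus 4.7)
The plan is to apply Itô's formula to a carefully chosen product process and use the PDE to cancel its drift, thereby producing a martingale whose endpoints directly yield the representation. Concretely, fix $y \in \R^d$ and $\varsigma \in [0, T]$, let $\vy_s$ solve (\ref{eq:fk-dynamic}) with initial datum $\vy_\varsigma = y$, and define for $s \in [\varsigma, T]$
\begin{align*}
E_s := \exp\left(\int_{\varsigma}^{s} c(\vy_u, u)\, \mathrm{d}u\right), \qquad M_s := v(\vy_s, s)\, E_s.
\end{align*}
The claim (\ref{eq:fk-rep}) is exactly $v(y, \varsigma) = \E[M_T \mid \vy_\varsigma = y]$, obtained by combining the terminal condition $v(\vy_T, T) = h(\vy_T)$ with the martingale property $v(y, \varsigma) = M_\varsigma = \E[M_T \mid \vy_\varsigma = y]$.

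First I would compute $\mathrm{d}v(\vy_s, s)$ via Itô's lemma applied to (\ref{eq:fk-dynamic}). Using $A = \tfrac{1}{2}\eta\eta^\top$ and the regularity $v \in C^{2,1}$, this gives
\begin{align*}
\mathrm{d}v(\vy_s, s) = \left[\partial_s v + b^\top \nabla v + A : H_v\right] \ds + (\nabla v)^\top \eta\, \mathrm{d}B_s'.
\end{align*}
Since $E_s$ is of finite variation with $\mathrm{d}E_s = c(\vy_s, s)\, E_s \ds$, the product rule produces no quadratic-covariation term and yields
\begin{align*}
\mathrm{d}M_s = E_s \left[\partial_s v + c\, v + b^\top \nabla v + A : H_v\right] \ds + E_s\, (\nabla v)^\top \eta\, \mathrm{d}B_s'.
\end{align*}
By the PDE (\ref{eq:fk-pde}) the bracketed drift vanishes identically, so $M_s$ is a local martingale on $[\varsigma, T]$.

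To upgrade $M_s$ to a true martingale, I would invoke the integrability hypotheses deferred to Appendix \ref{app:fk-assumption}: linear-growth bounds on $b, \eta$ give finite moments of $\vy_s$ on $[\varsigma, T]$; a one-sided bound on $c$ (or a Novikov-type condition) controls $E_s$; and polynomial-growth bounds on $v$ and $\eta^\top \nabla v$ then deliver $\E\int_{\varsigma}^{T} \| E_s (\nabla v)^\top \eta \|^2\, \ds < \infty$. This makes the stochastic integral a genuine martingale with mean zero, so $\E[M_T \mid \vy_\varsigma = y] = M_\varsigma = v(y, \varsigma)$, and substituting $v(\vy_T, T) = h(\vy_T)$ finishes the proof.

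The main obstacle is precisely the last step: the algebraic drift cancellation and Itô manipulations are mechanical once the process $M_s$ is written down, but verifying that the local martingale is a true martingale is where all analytic content resides. The delicate point is choosing the hypotheses in Appendix \ref{app:fk-assumption} strong enough to legitimise the expectation, yet mild enough to cover the intended applications---namely the marginal-density estimation in Section \ref{sec:density} (where $c = -\nabla \cdot \mu$ can be unbounded) and the score-matching reparameterization in Sections \ref{sec:inference}--\ref{sec:score}.
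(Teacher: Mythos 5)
The paper does not prove this theorem; it is quoted directly from Chapter 5.7 of Karatzas and Shreve, with only the growth conditions recorded in Appendix \ref{app:fk-assumption}. Your argument is the standard proof from that reference: apply It\^o's formula to $M_s = v(\vy_s,s)\exp(\int_\varsigma^s c\,\mathrm{d}u)$, cancel the drift via the PDE, and identify $v(y,\varsigma)=\E[M_T]$, so the structure is correct and matches the classical route. One caveat on your final step: the assumption actually stated in Appendix \ref{app:fk-assumption} bounds only $h$ and $v$ polynomially, not $\nabla v$ or $\eta^\top\nabla v$, so you cannot directly conclude $\E\int_\varsigma^T \|E_s(\nabla v)^\top\eta\|^2\,\mathrm{d}s<\infty$ from it; the standard repair (and the one used in the cited source) is to stop $M$ at the exit time of a ball of radius $n$, take expectations of the stopped martingale, and pass to the limit $n\to\infty$ by dominated convergence using the polynomial bounds on $v$ and $h$ together with the moment bounds on $\sup_s\|\vy_s\|$.
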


To estimate the density $p(\cdot, T)$ of (\ref{eq:fp-simplified}),
we can apply the change of variable $p(x, t) := v(x, T-t)$ by letting the Feynman-Kac (F-K) coefficients correspond to their Fokker-Planck (F-P) counterparts according to Table~\ref{tab:fk-fp}.
This way, solving (\ref{eq:fk-pde}) backward is equivalent to solving (\ref{eq:fp-simplified}) forward, and we have the following representation of the marginal density at $T$:
\begin{align}
    p(x,T) = \E\left[p_0(\vy_T) \exp\left(\int_0^T -\nabla \cdot \mu(\vy_{s}, T-s) \ds\right) \, \Bigg\vert \, \vy_0 = x\right],
    \label{eq:fk-fp}
\end{align}
where $\vy_s$ is a diffusion process solving
\begin{align}
    \mathrm{d} \vy = -\mu(\vy, T-s)\ds + \sigma(T-s) \,\mathrm{d} B_s'.
    \label{eq:fk-fp-dynamic}
\end{align}

\begin{restatable}[\textbf{Marginalization}]{rmk}{}
\label{rm:marginalization}
This representation can be interpreted as a mixture of continuous time flows. 
Assume a sample path of the Brownian motion is given, and we are interested in how the density evolves following the dynamic (\ref{eq:gen-sde}).
In the infinitesimal setting, it can been seen as applying the invertible map $x\mapsto x + \mu(x,t)\Delta t + \sigma(t)\Delta B_i$, where $\Delta B_i := B_{(i+1)\Delta t} - B_{i\Delta t}$ is the Brownian increment. 
Since the diffusion term is independent of the spatial variable, it can be seen as a constant additive transformation, which is volume preserving, so it will not be taken into account when computing the change of density.
The only contribution to the change of density will be from the log-determinant of the Jacobian of $\text{id} + \mu\Delta t$, which means we can simply apply the instantaneous change of variable formula \citep{chen2018neural}.
This will be the conditional density given the entire $\{B_t:t\geq0\}$, and marginalizing it out results in the expectation in (\ref{eq:fk-fp}). 
See Appendix \ref{app:mixture} for details. 
\end{restatable}

Our framework also works with the general case where $\sigma$ depends on $x$, but the formulae need to be adapted to account for the spatial partial derivatives. 
See Appendix \ref{app:density} for the derivation.

\section{Inferring latent Brownian motion}
\label{sec:inference}

As our goal is to estimate likelihood, we would like to compute the log density value using (\ref{eq:fk-fp}). 
However, this involves integrating out all possible Brownian paths, which is intractable. 
To resolve this, we view the Brownian motion as a latent variable, and perform inference by assigning higher probability to sample paths that are more likely to generate the observation.
One can view this as a VAE, except we have an infinite dimensional latent variable. 

Formally, let $(\Omega, \gF,\sP)$ be the underlying probability space for which $B_s'$ is a Brownian motion. 
Suppose $\sQ$ is another probability measure on $(\Omega, \gF)$ \emph{equivalent} to $\sP$; that is, $\sP$ and $\sQ$ are similar in the sense that they have the same measure zero sets. 
This allows us to apply the change-of-measure trick and lower bound the log-likelihood with a finite quantity using Jensen's inequality:
\begin{align}
    \log p(x, T) 
    &\geq  \E_\sQ\left[ \log \frac{\mathrm{d}\sP}{\mathrm{d}\sQ} + \log p_0(\vy_T) - \int_0^T \nabla\cdot\mu \ds \,\bigg|\, \vy_0=x\right].
    \label{eq:elbo}
\end{align}
Note that $\frac{\mathrm{d}\sP}{\mathrm{d}\sQ}$ is the \emph{Radon-Nikodym} derivative of $\sP$ wrt $\sQ$. 
When both measures are absolutely continuous wrt a third measure, say Lebesgue, then the derivative can be expressed as the ratio of the two densities. 
However, since we are dealing with an infinite dimensional space, we are immediately faced with the following problems:
\begin{enumerate}
    \item Is there a measure $\sQ$ (equiv. to $\sP$) for which $\frac{\mathrm{d}\sP}{\mathrm{d}\sQ}$ can be easily computed, or at least numerically approximated?
    \item Can we find a reparameterization (similar to the Gaussian reparameterization) of $B_s'$ under the new law $\sQ$ to estimate the gradient needed for training?
\end{enumerate}

We resort to the \emph{Girsanov theorem}, which describes a general framework for dealing with the change of measure of Gaussian random variables under additive perturbation. 
It allows us to consider the law of a diffusion process as $\sQ$.
See Appendix \ref{app:1d-girsanov} for an explanation using the more familiar notion of probability densities. 

\begin{restatable}[\textbf{Girsanov theorem}, Theorem 8.6.3 of \citet{oksendal2003stochastic}]{thm}{girsanov}
Let $\hat{B}_s$ be an It\^o process solving
\begin{align}
    \mathrm{d}\hat{B}_s = - a(\omega, s) \ds + \mathrm{d}B_s',
    \label{eq:girsanov-standardize}
\end{align}
for $\omega\in\Omega$, $0\leq s\leq T$ and $\hat{B}_0=0$,
where $a(\omega, s)$ satisfies the Novikov's condition $\E\left[\exp\left(\frac{1}{2}\int_0^T a^2 \,\mathrm{d}s\right)\right] < \infty$.
Then $\hat{B}_s$ is a Brownian motion wrt $\sQ$ where 
\begin{align}
    \frac{\mathrm{d}\sQ}{\mathrm{d}\sP}(\omega) := \exp\left(\int_0^T a(\omega, s) \cdot \mathrm{d}B_s' - \frac{1}{2} \int_0^T ||a(\omega, s)||_2^2 \ds\right).
    \label{eq:girsanov-density}
\end{align}
\end{restatable}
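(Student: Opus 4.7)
The plan is to introduce the exponential process
\begin{equation*}
M_t := \exp\left(\int_0^t a(\omega, s) \cdot \mathrm{d}B_s' - \frac{1}{2}\int_0^t \|a(\omega, s)\|_2^2 \ds\right),
\end{equation*}
so that the proposed Radon-Nikodym density $\mathrm{d}\sQ/\mathrm{d}\sP$ is $M_T$. First I would verify that $(M_t)_{t\in[0,T]}$ is a true $\sP$-martingale. It\^{o}'s formula applied to the exponential gives $\mathrm{d}M_t = M_t\, a(\omega,t) \cdot \mathrm{d}B_t'$, so $M$ is automatically a positive local $\sP$-martingale and hence, by Fatou's lemma, a supermartingale with $\E_\sP[M_T] \le M_0 = 1$. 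Upgrading this inequality to $\E_\sP[M_T]=1$ (equivalently, promoting $M$ to a true martingale) is exactly what Novikov's hypothesis is designed to provide. Once this is settled, $\mathrm{d}\sQ := M_T\, \mathrm{d}\sP$ defines a probability measure equivalent to $\sP$.

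Given a valid $\sQ$, I would verify that $\hat B_s$ is a $\sQ$-Brownian motion via L\'evy's characterization, which requires checking that $\hat B$ is a continuous $\sQ$-martingale with $\hat B_0 = 0$ and quadratic covariation matrix $[\hat B]_s = s\, I$. Continuity and the initial condition are immediate from the SDE $\mathrm{d}\hat B_s = -a \ds + \mathrm{d}B_s'$; sample paths agree $\sP$- and $\sQ$-almost surely since the two measures share null sets. The quadratic covariation is also immediate, as the absolutely continuous drift contributes nothing and $[B']_s = s\, I$ is a pathwise quantity invariant under an equivalent change of measure.

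The only nontrivial remaining step is the $\sQ$-martingale property of $\hat B$. I would invoke the abstract Bayes rule: a process $X$ is a $\sQ$-martingale iff $X M$ is a $\sP$-martingale. Applying It\^{o}'s product rule to $\hat B_s M_s$ using $\mathrm{d}\hat B_s = -a \ds + \mathrm{d}B_s'$ together with the Dol\'eans-Dade SDE $\mathrm{d}M_s = M_s\, a \cdot \mathrm{d}B_s'$, the cross-variation term $\mathrm{d}[\hat B, M]_s = M_s\, a \ds$ exactly cancels the $-M_s a \ds$ drift coming from $\mathrm{d}\hat B_s$, so that $\mathrm{d}(\hat B_s M_s)$ collapses to a driftless stochastic integral against $\mathrm{d}B_s'$. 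This makes $\hat B_s M_s$ a local $\sP$-martingale; a standard localization plus uniform-integrability argument, again leveraging the exponential integrability of $M$, promotes it to a true martingale, finishing the L\'evy check.

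The hard part, as always for Girsanov's theorem, is the Novikov step. Without true martingality of $M$ one could only conclude $\sQ(\Omega) \le 1$, which would invalidate the whole construction. The classical route approximates $a$ by truncations $a^{(n)} := a\, \mathbf{1}_{\{\|a\| \le n\}}$, for which the corresponding exponential processes are bounded (hence trivially martingales), and then passes to the limit using the exponential moment hypothesis $\E[\exp(\tfrac{1}{2}\int_0^T \|a\|^2 \ds)] < \infty$ to establish the uniform integrability required to transfer the martingale property to the limit. Granted this step, everything else reduces to routine stochastic-calculus bookkeeping.
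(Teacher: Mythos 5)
Your proposal is a correct outline of the standard proof, but note that the paper does not prove this statement at all: it is imported verbatim as Theorem 8.6.3 of \citet{oksendal2003stochastic}, and the only supporting material the paper offers is the heuristic one-dimensional Gaussian analogy in Appendix \ref{app:1d-girsanov}, where the density ratio $e^{a\epsilon' - a^2/2}$ and the standardization $\hat\epsilon = \epsilon' - a$ are exhibited for a single normal random variable. Your route --- Dol\'eans-Dade exponential $M_t$, Novikov's condition to upgrade the positive local martingale to a true martingale with $\E_\sP[M_T]=1$, definition of $\sQ$ via $\mathrm{d}\sQ = M_T\,\mathrm{d}\sP$, and then L\'evy's characterization checked through the abstract Bayes rule (with the cross-variation $\mathrm{d}[\hat B, M]_s = M_s a \ds$ cancelling the drift of $M_s\,\mathrm{d}\hat B_s$) --- is precisely the textbook argument that the citation points to, and your sign conventions are consistent with the paper's (\ref{eq:girsanov-standardize}) and (\ref{eq:girsanov-density}). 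The one place where your write-up is a plan rather than a proof is the Novikov step itself: you correctly identify it as the crux and sketch the truncation-plus-uniform-integrability argument, but do not carry it out; for a result of this vintage that is a reasonable place to defer to the literature, exactly as the paper does. In short, you supply the rigorous infinite-dimensional argument where the paper supplies only a citation and a finite-dimensional intuition; the two are complementary rather than in conflict.
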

Equation (\ref{eq:girsanov-standardize}) provides a standarization formula of $B_s'$ under $\sQ$, which means we can ``invert'' it to reparameterize $B_s'$.
This leads to the following lower bound.

\begin{restatable}[\textbf{Continuous-time ELBO}]{thm}{ctelbo}
Let $\sQ$ be defined via the density (\ref{eq:girsanov-density}).
Then the RHS of (\ref{eq:elbo}) can be rewritten as
\begin{align}
    \E\left[ 
    - \frac{1}{2} \int_0^T ||a(\omega, s)||_2^2 \ds + \log p_0(\vy_T) - \int_0^T \nabla\cdot\mu \ds
    \, \Bigg\vert \, \vy_0=x
    \right]  =: \gE^\infty,
    \label{eq:continuous:elbo}
\end{align}
where the expectation is taken wrt the Brownian motion $\hat{B}_s$, and $\vy_s$ solves\footnote{Note that $\mu$ and $\sigma$ run backward in time from $T$, whereas $a$ runs forward.}
\begin{align}
    \mathrm{d} \vy = (-\mu + \sigma a)  \ds + \sigma \mathrm{d} \hat{B}_s.
    \label{eq:fk-fp-dynamic-reparameterized}
\end{align}
\end{restatable}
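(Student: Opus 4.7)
The plan is to combine the two change-of-measure identities the statement is built on: the log Radon-Nikodym derivative from Girsanov, and the resulting reparameterization of $B_s'$. The computation is essentially bookkeeping once one is careful about which law the expectation is taken under.

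First, I would invert the Girsanov density. Since $\frac{\mathrm{d}\sP}{\mathrm{d}\sQ}(\omega) = \left(\frac{\mathrm{d}\sQ}{\mathrm{d}\sP}(\omega)\right)^{-1}$, taking logs in (\ref{eq:girsanov-density}) gives
\begin{align*}
    \log \frac{\mathrm{d}\sP}{\mathrm{d}\sQ}(\omega) = -\int_0^T a(\omega,s) \cdot \mathrm{d}B_s' + \frac{1}{2}\int_0^T \|a(\omega,s)\|_2^2\ds.
\end{align*}
Next, I would use the identity $\mathrm{d}B_s' = \mathrm{d}\hat{B}_s + a\ds$, which is just (\ref{eq:girsanov-standardize}) rearranged, to rewrite the stochastic integral against $B_s'$ as one against $\hat{B}_s$:
\begin{align*}
    \log \frac{\mathrm{d}\sP}{\mathrm{d}\sQ}(\omega) = -\int_0^T a \cdot \mathrm{d}\hat{B}_s - \int_0^T \|a\|_2^2\ds + \frac{1}{2}\int_0^T \|a\|_2^2\ds = -\int_0^T a \cdot \mathrm{d}\hat{B}_s - \frac{1}{2}\int_0^T \|a\|_2^2\ds.
\end{align*}

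Now under $\sQ$, the Girsanov theorem guarantees that $\hat{B}_s$ is a standard Brownian motion, and Novikov's condition supplies enough integrability that the stochastic integral $\int_0^T a\cdot \mathrm{d}\hat{B}_s$ is a true (not merely local) martingale with zero mean. Consequently,
\begin{align*}
    \E_{\sQ}\left[\log \frac{\mathrm{d}\sP}{\mathrm{d}\sQ}\,\bigg|\,\vy_0=x\right] = -\frac{1}{2}\E_{\sQ}\left[\int_0^T \|a(\omega,s)\|_2^2\ds\,\bigg|\,\vy_0=x\right].
\end{align*}
Plugging this into the RHS of (\ref{eq:elbo}) and relabeling the expectation (taken under $\sQ$, which makes $\hat{B}_s$ Brownian) produces exactly the expression $\gE^\infty$ claimed in (\ref{eq:continuous:elbo}).

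It remains to verify the stated dynamics (\ref{eq:fk-fp-dynamic-reparameterized}) for $\vy_s$ under $\sQ$. Starting from the original SDE (\ref{eq:fk-fp-dynamic}), $\mathrm{d}\vy = -\mu(\vy,T-s)\ds + \sigma(T-s)\,\mathrm{d}B_s'$, and again substituting $\mathrm{d}B_s' = \mathrm{d}\hat{B}_s + a\ds$, one obtains $\mathrm{d}\vy = (-\mu + \sigma a)\ds + \sigma\,\mathrm{d}\hat{B}_s$, which is (\ref{eq:fk-fp-dynamic-reparameterized}). Since $\hat{B}_s$ is a Brownian motion under $\sQ$, the expectation in (\ref{eq:continuous:elbo}) is the expectation of this reparameterized SDE driven by a genuine Brownian motion, completing the identification.

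The main obstacle is not algebraic but integrability: one has to justify that $\E_\sQ[\int_0^T a\cdot \mathrm{d}\hat{B}_s]=0$. This is where Novikov's condition from the Girsanov hypothesis is essential, as it ensures the Doléans-Dade exponential is a genuine martingale and that $a$ has the required square-integrability under $\sQ$, so the stochastic integral is a martingale rather than merely a local one. The rest is substitution.
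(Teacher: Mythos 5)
Your proof is correct and follows essentially the same route as the paper's: invert the Girsanov density to get $\log\frac{\mathrm{d}\sP}{\mathrm{d}\sQ}$, substitute $\mathrm{d}B_s' = \mathrm{d}\hat{B}_s + a\ds$ both in the log-density and in the SDE for $\vy_s$, and kill the stochastic integral in expectation because $\hat{B}_s$ is a $\sQ$-Brownian motion. Your added remark that Novikov's condition is what upgrades the local martingale to a true martingale with zero mean is a welcome bit of extra care that the paper handles by citation.
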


We call $\vy_s$ solving (\ref{eq:fk-fp-dynamic-reparameterized}) the inference SDE, and $\gE^\infty$ the continuous-time ELBO (CT-ELBO).

\begin{restatable}[\bf Computation]{rmk}{}
\label{rm:computation}
This lower bound can be numerically estimated by using any black box SDE solver, by augmenting the dynamic of $y$ with the accumulation of $||a||^2$ and $\nabla\cdot\mu$.
Computing the divergence term $\nabla\cdot\mu$ directly can be expensive, but it can be efficiently estimated using the Hutchinson trace estimator \citep{hutchinson1989stochastic} along with reverse-mode automatic differentiation, similar to \citet{grathwohl2018ffjord}. 
As the parameters of both the generative and inference models are decoupled from the random variable $\hat{B}_s$, their gradients can be estimated via the reparameterization trick \citep{kingma2014auto, rezende2014stochastic}.
Furthermore, backpropagation can be computed using an adjoint method with a constant memory cost \citep{li2020scalable}.
\end{restatable}

\begin{restatable}[\bf Drift $a$]{rmk}{}
(i) In general, the drift term of the approximate posterior to the latent Brownian motion can be amortized, so that it will encode the information of individual datum $x$.
(ii) The regularization $||a||^2$ ensures that $a$ is kept close to $0$, since it represents the deviation of the measure it induces (\ie $\sQ$) from the classical Wiener measure (which is a centered Gaussian measure).
(iii) When the diffusion coefficient $\sigma$ is $0$, the inference SDE reduces to the reverse dynamic of the generative ODE, and if $a\equiv 0$ in this case, the lower bound is tight.
(iv) There is generally no constraint on the form of $a(\omega, s)$, so one can potentially augment it with additional dimensions to have a non-Markovian inference SDE. 
For simplicity, we let the inference SDE be a Markovian model, i.e. $a=a(y, s)$. This is justified by the following theorem. 
\end{restatable}

\begin{restatable}[\textbf{Variational gap and optimal inference SDE}]{thm}{optinf}
\label{thm:optinf}
The variational gap can be written as 
\begin{align}
    \log p(x,T) - \gE^\infty = \int_0^T\E\left[||a(\omega, s) - \sigma^\top \nabla \log p(\vy_s, T-s)||^2\right] \ds.
\end{align}
In particular, 
$\gE^\infty=\log p(x,T)$ if and only if
$a(\omega, s)$ can be written as $a(\omega, s) = a(\vy_s(\omega), s)$ for almost every $s\in[0,T]$ and $\omega\in\Omega$, and
$a(y, s)=\sigma^\top \nabla \log p(y, T-s)$ almost everywhere.
\end{restatable}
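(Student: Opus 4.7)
The plan is to apply It\^o's formula to the time-reversed log-density $\phi(y,s) := \log p(y, T-s)$ along the inference SDE (\ref{eq:fk-fp-dynamic-reparameterized}), then take expectations under $\sQ$ (under which $\hat{B}_s$ is Brownian motion) so that the stochastic integral vanishes as a martingale. Comparing the resulting identity for $\log p(x,T)$ to the definition of $\gE^\infty$, the difference will telescope into a quadratic in $a - \sigma^\top \nabla \log p$, giving both the formula and the tightness criterion essentially for free.

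First I would compute $\partial_s \phi(y,s) = -\partial_t p(y, T-s) / p(y, T-s)$ and substitute the Fokker-Planck equation (\ref{eq:fp-simplified}) to get
\begin{equation*}
\partial_s \phi \;=\; (\nabla\cdot\mu) + \mu^\top \nabla\log p - \frac{D:H_p}{p}.
\end{equation*}
The key algebraic observation is the cancellation
\begin{equation*}
\frac{H_p}{p} \;=\; H_{\log p} + (\nabla\log p)(\nabla\log p)^\top,
\end{equation*}
so that $D:H_p / p = D:H_\phi + \tfrac{1}{2}\|\sigma^\top \nabla\log p\|^2$, where I have used that $D = \tfrac{1}{2}\sigma\sigma^\top$ is position-independent. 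Writing out $d\phi(\vy_s,s)$ by It\^o's formula along the inference SDE, the $D:H_\phi$ term from the It\^o correction exactly cancels its counterpart in $\partial_s \phi$, and the $\mu^\top \nabla\log p$ term cancels against the drift piece $\nabla\phi^\top(-\mu)$. What remains is
\begin{equation*}
d\phi(\vy_s,s) = \Bigl[(\nabla\cdot\mu) - \tfrac{1}{2}\|\sigma^\top \nabla\log p\|^2 + a^\top (\sigma^\top\nabla\log p)\Bigr] \ds + (\sigma^\top \nabla\log p)^\top \mathrm{d}\hat{B}_s.
\end{equation*}

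Integrating from $s=0$ to $s=T$, using $\phi(\vy_0,0) = \log p(x,T)$ and $\phi(\vy_T,T) = \log p_0(\vy_T)$, and taking $\E_\sQ$ so the martingale term drops out (assuming the standard integrability of $\sigma^\top\nabla\log p$ along the inference path), yields a closed-form expression for $\log p(x,T)$. Subtracting the definition of $\gE^\infty$ in (\ref{eq:continuous:elbo}) gives
\begin{equation*}
\log p(x,T) - \gE^\infty \;=\; \E\!\left[\int_0^T \Bigl(\tfrac{1}{2}\|a\|^2 + \tfrac{1}{2}\|\sigma^\top\nabla\log p\|^2 - a^\top\sigma^\top\nabla\log p\Bigr)\ds\right],
\end{equation*}
which is exactly the squared-distance integral in the statement (up to the overall normalization of $\|\cdot\|$). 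For the tightness claim, since the integrand is pointwise nonnegative, $\gE^\infty = \log p(x,T)$ forces $a(\omega,s) = \sigma^\top \nabla\log p(\vy_s(\omega), T-s)$ for almost every $(\omega,s)$; in particular $a$ is automatically a measurable function of $\vy_s$ and $s$ on this set, giving both halves of the ``if and only if.''

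The main obstacle is technical rather than conceptual: justifying that the local martingale $\int_0^\cdot (\sigma^\top\nabla\log p)^\top d\hat{B}_s$ is a true martingale under $\sQ$, so that its expectation is zero. This requires a square-integrability condition on $\sigma^\top\nabla\log p(\vy_s,T-s)$ along the inference process, which I would state as a regularity hypothesis (e.g., collect it alongside the Novikov-type assumptions already invoked for Girsanov in Section \ref{sec:inference}) and defer the verification to the appendix, as is standard in this literature.
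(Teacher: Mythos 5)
Your proposal is correct and follows essentially the same route as the paper's proof: both apply It\^o's formula to $\log p(\vy_s, T-s)$ along the inference SDE, substitute the Fokker--Planck equation for the time derivative, cancel the Hessian and drift terms, and drop the stochastic integral in expectation to leave the quadratic gap. Your version is in fact somewhat more explicit than the paper's (spelling out the $H_p/p = H_{\log p} + (\nabla\log p)(\nabla\log p)^\top$ cancellation and flagging the martingale-integrability caveat, which the paper leaves implicit), and you correctly note the factor-of-$\tfrac{1}{2}$ normalization mismatch with the theorem statement.
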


\begin{restatable}[\bf Variational gap]{rmk}{}
Even though the inference SDE seemingly takes a simple form, it is sufficiently flexible in that
this type of variational problem can be generally solved by taking the supremem over all progressively measurable processes $a(\omega, s)$ \citep{boue1998variational}.
In fact, the above theorem shows that
$\gE^\infty = \log p(x,T)$ if and only if $a(y, s) = \sigma^\top \nabla \log p(y, T-s)$.
This means a non-amortized Markovian inference process is powerful enough. 
\end{restatable}

\section{Infinitely deep hierarchical VAE}
\label{sec:infvae}
Before we make the connection to score matching, we formally address the common belief that ``diffusion models can be viewed as the continuous limit of hierarchical VAEs'' \citep{tzen2019neural}, and show that the CT-ELBO consistently extends their discrete-time counterpart. 
We do so by inspecting the ELBO of a hierarchical VAE defined as discretized\footnote{We follow the Euler-Maruyama (EM) scheme. Other discretization scheme may also work; we leave that for future work.} generative and inference SDEs. 
We assume the generative model (\ie the decoder) follows the transitional distributions
\begin{gather}
    p(x_{i+1} | x_{i}) = \gN(x_{i+1} ; \tilde{\mu}_{i}(x_{i}), \tilde{\sigma}^2_{i}) \\
    \tilde{\mu}_{i}(x) = x + \Delta t\mu(x, i\Delta t) \qquad
    \tilde{\sigma}^2_{i} = \Delta t\sigma^2(i\Delta t),
    \label{eq:discrete-transition-p}
\end{gather}
where $\Delta t = T/L$ is the step size and $L$ is the number of layers. 
For the inference model (\ie the encoder), we assume
\begin{gather}
    q(x_{i} | x_{i+1}) = \gN(x_{i} ; \hat{\mu}_{i+1}(x_{i+1}), \hat{\sigma}^2_{i+1}) \\
    \hat{\mu}_{i}(x) = x + \Delta t (- \mu(x, i\Delta t) + \sigma(i\Delta t) a(x,T-i\Delta t)) \qquad
    \hat{\sigma}^2_{i} = \Delta t\sigma^2(i\Delta t).
    \label{eq:discrete-transition-q}
\end{gather}

These transition kernels constitute a hierarchical variational autoencoder of $L$ stochastic layers, whose marginal likelihood can be lower bounded by
\begin{align}
    \log p(x_{L}) \geq \E_q\left[\log p(x_0)+\sum_{i=0}^{L-1}\log \frac{p(x_{i+1} |x_i)}{ q(x_i|x_{i+1})}\right] =: \gE^L,
    \label{eq:discrete-elbo}
\end{align}
which we refer to as the discrete-time ELBO (DT-ELBO). 
The reconstruction error of the stochastic layer can be seen as some form of finite difference approximation to differentiation, which gives rise to $\nabla\cdot\mu$ in the CT-ELBO in the infinitesimal limit (as $\Delta t$ approaches $0$).
The regularization of $||a||^2$ pops up when we compare the difference between $\tilde{\mu}_i$ and $\hat{\mu}_i$ using the Gaussian reparameterization to compute the reconstruction error. 
We formalize this idea in the following theorem.

\begin{restatable}[\textbf{Consistency}]{thm}{consistency}
Assume $\mu$, $\sigma$, $\sigma^{-2}$, $a$, $||a||^2$ and their derivatives up to the fourth order are all bounded and continuous, and that $\sigma$ is non-singular. Then $\gE^L\rightarrow\gE^\infty$ as $L\rightarrow\infty$.
\end{restatable}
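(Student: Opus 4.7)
The plan is to expand each Gaussian log-ratio appearing in the sum defining $\gE^L$ via the reparameterization under $q$, verify that its per-step conditional expectation equals $-\Delta t\,[\nabla\cdot\mu + \tfrac12\|a\|^2]$ up to an $O(\Delta t^{3/2})$ remainder, and then pass to the limit where the Riemann sums become the integrals in $\gE^\infty$. The smoothness and boundedness hypotheses supply the uniform control needed to aggregate the $L = T/\Delta t$ local remainders into a vanishing global error.

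\textbf{Main computation.} Write $x_i = \hat{\mu}_{i+1}(x_{i+1}) + \sqrt{\Delta t}\,\sigma_{i+1}\,\xi_i$ with $\xi_i\sim\gN(0,I)$ i.i.d., $\sigma_j := \sigma(j\Delta t)$, and $\epsilon_i^p := x_{i+1} - x_i - \Delta t\,\mu(x_i,i\Delta t)$. From the Gaussian densities,
\begin{align*}
\log\frac{p(x_{i+1}\mid x_i)}{q(x_i\mid x_{i+1})} = \frac{d}{2}\log\frac{\sigma_{i+1}^2}{\sigma_i^2} - \frac{\|\epsilon_i^p\|^2}{2\Delta t\,\sigma_i^2} + \frac{1}{2}\|\xi_i\|^2,
\end{align*}
using $\|\epsilon_i^q\|^2 = \Delta t\,\sigma_{i+1}^2\|\xi_i\|^2$. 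Taylor-expanding $\mu(x_i,i\Delta t)$ about $(x_{i+1},(i+1)\Delta t)$ with $\Delta x := x_i - x_{i+1} = \Delta t(-\mu_{i+1}+\sigma_{i+1}a_{i+1})+\sqrt{\Delta t}\,\sigma_{i+1}\xi_i = O(\sqrt{\Delta t})$ in $L^2$ gives $\epsilon_i^p = -\sqrt{\Delta t}\,\sigma_{i+1}\xi_i - \Delta t\,\sigma_{i+1}a_{i+1} - \Delta t^{3/2}\sigma_{i+1}(\nabla\mu)\xi_i + O(\Delta t^2)$. Squaring, dividing by $2\Delta t\sigma_i^2$, and using $\sigma_{i+1}^2/\sigma_i^2 = 1 + O(\Delta t)$ (the induced $-\tfrac{\sigma_{i+1}^2-\sigma_i^2}{2\sigma_i^2}\|\xi_i\|^2$ correction cancels $\tfrac{d}{2}\log(\sigma_{i+1}^2/\sigma_i^2)$ in expectation up to $O(\Delta t^2)$ per step) yields
\begin{align*}
\log\frac{p(x_{i+1}\mid x_i)}{q(x_i\mid x_{i+1})} = -\sqrt{\Delta t}\,a_{i+1}\!\cdot\!\xi_i - \Delta t\,\xi_i^\top(\nabla\mu)\xi_i - \tfrac{\Delta t}{2}\|a_{i+1}\|^2 + S_i,
\end{align*}
with $\E|S_i| = O(\Delta t^{3/2})$ uniformly.

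\textbf{Passage to the limit.} Conditioning on $x_{i+1}$, $\xi_i$ is standard Gaussian and independent, so $\E[\xi_i]=0$ and $\E[\xi_i^\top(\nabla\mu)\xi_i] = \nabla\!\cdot\!\mu(x_{i+1},(i+1)\Delta t)$; each summand contributes $-\Delta t\,\E[\nabla\!\cdot\!\mu + \tfrac12\|a\|^2] + O(\Delta t^{3/2})$. Summing over $i=0,\dots,L-1$ produces a Riemann sum (with spacing $\Delta t$) for $-\int_0^T \E\!\left[\nabla\!\cdot\!\mu(\vy_s,T-s)+\tfrac12\|a(\vy_s,s)\|^2\right]\ds$, while $\E_q[\log p_0(x_0)]\to\E[\log p_0(\vy_T)]$ by continuity of $\log p_0$ together with the classical strong Euler--Maruyama convergence of the reversed chain $(x_{L-j})_j$ to the inference SDE $\vy_s$ under smooth, bounded, nondegenerate coefficients. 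Combining the three pieces yields $\gE^L\to\gE^\infty$.

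\textbf{Main obstacle.} The delicate part is making the $O(\Delta t^{3/2})$ per-step remainder legitimate uniformly in $i$, so that the total $L\cdot O(\Delta t^{3/2}) = O(\sqrt{\Delta t})$ vanishes. This requires: (i) the bounded fourth derivatives of $\mu,\sigma,a$, which control the pointwise Taylor residuals and the variance of the quadratic-in-$\xi$ cross terms; (ii) $\sigma^{-2}$ bounded (i.e., $\sigma$ non-singular), so the $1/(2\Delta t\sigma_i^2)$ normalization does not amplify the residuals; and (iii) bounded $\|a\|^2$, so the martingale $\sum_i \sqrt{\Delta t}\,a_{i+1}\!\cdot\!\xi_i$ has $L^2$ norm $O(1)$ and its mean-zero fluctuations disappear in expectation. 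Chasing these moment bounds and patching them with strong EM convergence to transfer discrete expectations to SDE expectations is the bulk of the careful bookkeeping.
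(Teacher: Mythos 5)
Your proposal is correct and follows essentially the same route as the paper's proof: Gaussian reparameterization of the encoder transition, Taylor expansion of $\mu$ to extract the $\sqrt{\Delta t}$-scaled cross term whose conditional expectation is $\Delta t\,\nabla\cdot\mu$ via the trace identity, cancellation of the variance-ratio/normalizing-constant terms (which the paper organizes as a telescoping sum of $\log\det\sigma$ differences rather than per-step cancellation, an equivalent bookkeeping choice), and passage to the limit by combining the Riemann-sum convergence with the mean-square convergence of the Euler--Maruyama scheme. No substantive differences to report.
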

This theorem tells us that the CT-ELBO we derive for continuous-time diffusion models is not that different from the traditional ELBO, and that maximizing the CT-ELBO can be seen as training an infinitely deep hierarchical VAE. 
We present the proof in Appendix \ref{app:proofs}, which formalizes the above intuition, using Taylor's theorem to control the polynomial approximation error, which will go to $0$ as the step size $\Delta t$ vanishes when the number of layers $L$ increases to infinity.

\section{Score-based generative modeling}
\label{sec:score}

\begin{figure}
    \centering
    \includegraphics[width=0.50\textwidth]{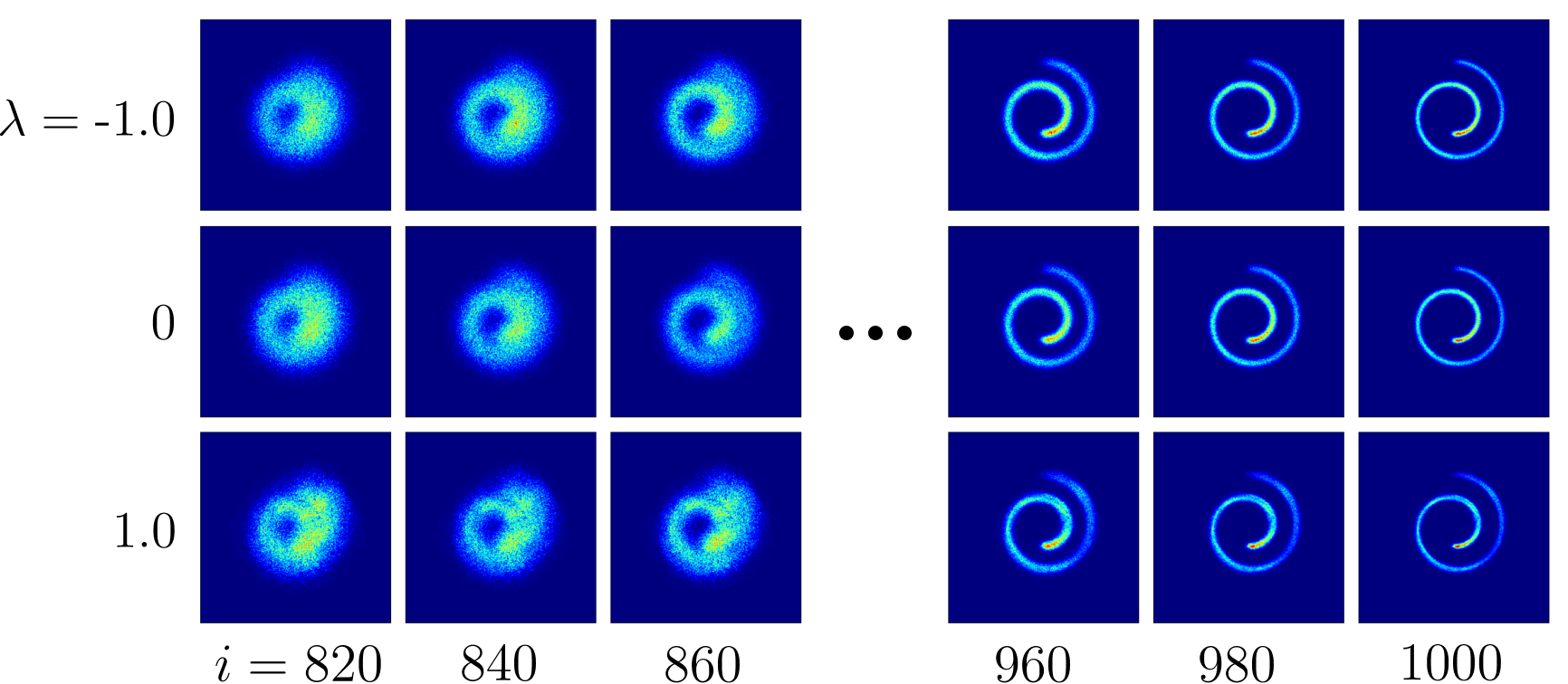}
    \hfill
    \includegraphics[width=0.44\textwidth]{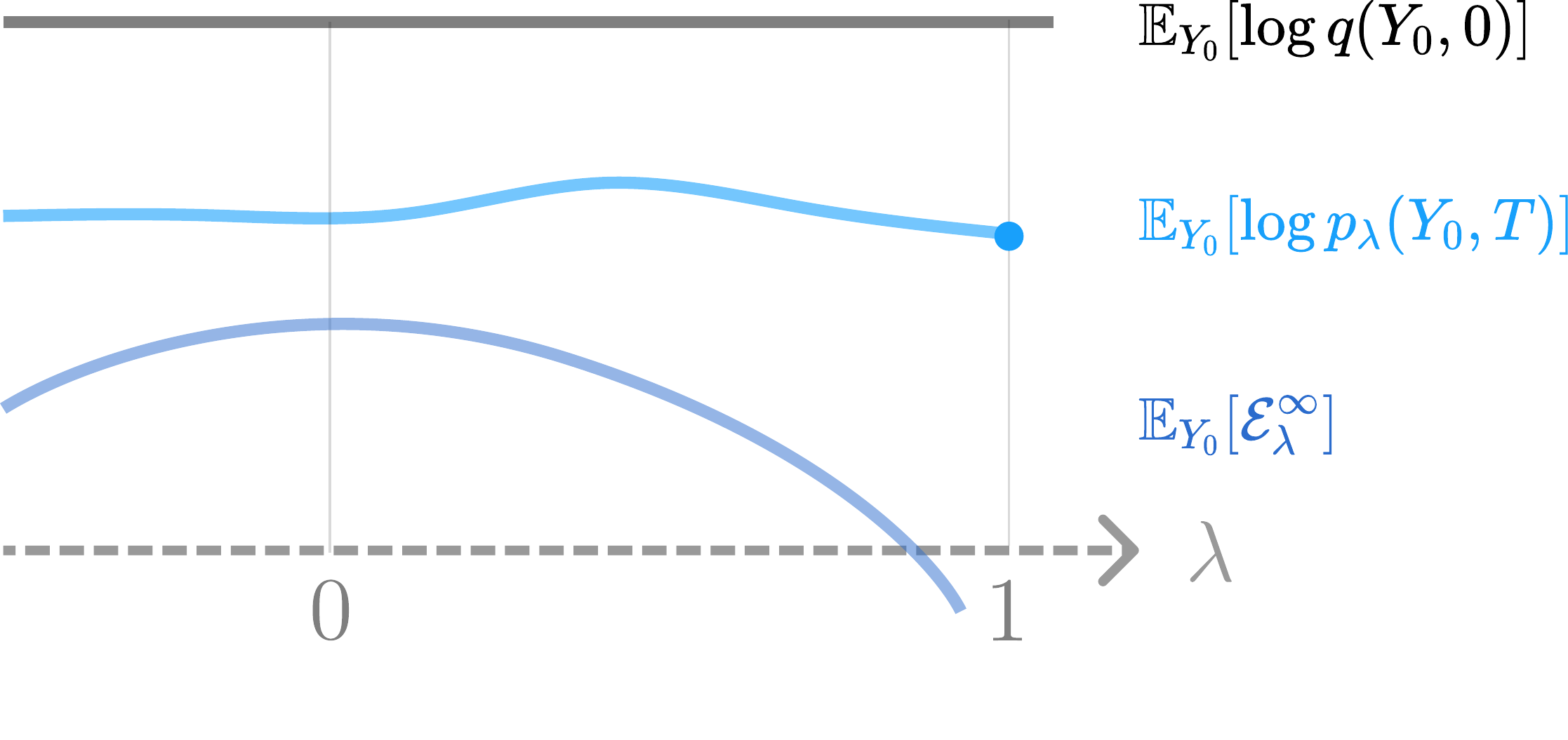}
    \caption{
    \emph{Left}: Samples from plug-in reverse SDEs with different $\lambda$ values (rows). 
    We use the same score function $\sc$ trained on the Swiss roll dataset, and plug it into (\ref{eq:lambda-sde-pair}). 
    For generation, we use the Euler Maruyama method with a step size of $\Delta t = 1/1000$. 
    We visualize the samples for the $i$-th iterates (columns), which approximately represent the same marginal distribution when the score function is well trained.
    \emph{Right}:
    Lower bound on the marginal likelihood of a continuum of plug-in reverse SDEs.
    The lower bound is optimized when the score matching loss is minimized, which will push up the entire dark blue curve. 
    }
    \label{fig:equiv-plug-in-lambda-elbo}
\end{figure}
Recall that our goal is to analyze the plug-in reverse SDE and draw connection to score matching.
To this end, we reparameterize the generative (\ref{eq:gen-sde}) and inference (\ref{eq:fk-fp-dynamic-reparameterized}) SDEs as
\begin{align}
    \mathrm{d}\vx = (gg^\top \sc -f)\dt + g\,\mathrm{d}B_t \,\,\text{ and }\,\, \mathrm{d}\vy = f\ds + g\,\mathrm{d}\hat{B}_s,
    \label{eq:plug-in-reverse-forward}
\end{align}
by letting $a=g^\top\sc$, where the time variable is reversed ($T-t$) for the generative process, and forward in time ($s$) for inference.
The ELBO (\ref{eq:continuous:elbo}) can be rewritten as 
\begin{align}
    \gE^\infty 
    = \E_{\vy_T}[\log p_0(\vy_T)\,|\,\vy_0=x] - \int_0^T \E_{\vy_s}\left[ \frac{1}{2} ||\sc||_{gg^\top}^2 + \nabla \cdot (gg^\top \sc - f) \,\bigg|\, \vy_0=x \right] \ds.
    \label{eq:continuous-elbo-ism}
\end{align}

Comparing the integrand to the implicit score matching loss in Table \ref{tab:score-matching-losses}, we immediately see that the network $\sc$ approximates $\nabla \log q(y, s)$, the score function of the marginal density of $\vy_s$. 
That is, \emph{matching the score of $q(y,t)$ amounts to maximizing the lower bound on the marginal likelihood of the plug-in reverse SDE}. %

Recently, \citet{durkan2021maximum}\footnote{This refers to the \href{https://arxiv.org/abs/2101.09258v1}{v1} of the paper on arXiv. This version was later on replaced with a new version where they derived a similar bound as ours.} also attempt to establish the equivalency between maximum likelihood and score matching, by showing the following relationship between the forward KL divergence and a weighted sum of score matching loss (aka the Fisher divergence):
\begin{align}
    \KL(q(y,0) ||r(y, 0)) = \frac{1}{2}\int_0^T \E_{q(\cdot, s)}\left[||\nabla \log r(\vy_s, s) - \nabla \log q(\vy_s, s)||_{gg^\top}^2\right]\ds,
\end{align}

where $r(y, s)$ is the density of $\vy_s$ solving the same inference SDE with the initial condition $y_0\sim r(\cdot, 0)$, assuming $q(y,T)=r(y,T)$.
However, it is inaccurate to claim that score matching is equivalent to maximum likelihood. 
This is because if we simply let $r(y, 0)= p(y, T)$, \ie the density of the generative SDE evaluated at $y$, $r(y, s)$ will not necessarily be the same as either $p(y, T-s)$ or $\sc(y, s)$.
This means the KL divergence is not equal to the integral of the weighted score matching loss $\E[\frac{1}{2}||\sc - \nabla \log q||_{gg^\top}^2]$. 
In fact, the latter corresponds to a lower bound on the likelihood (the cross-entropy term of the KL) up to some constant, as equation (\ref{eq:continuous-elbo-ism}) suggests.

More generally, we can apply our analysis to a family of plug-in reverse SDEs indexed by some parameter $\lambda\leq1$:
\begin{align}
    \resizebox{0.99\textwidth}{!}{%
    $\mathrm{d} \vx =
    \left(\left(1-\frac{\lambda}{2}\right)g^2 \sc - f\right) \dt + \sqrt{1-\lambda} g \,\mathrm{d}B_t
    \,\,\text{ and }\,\, 
    \mathrm{d} \vy = \left(f - \frac{\lambda}{2} g^2 \nabla \log q\right) \ds + \sqrt{1-\lambda} g \,\mathrm{d}\hat{B}_s,
    $}
    \label{eq:lambda-sde-pair}
\end{align}
where we assume $g$ is diagonal for simplicity. 
We defer the formal discussion to Appendix \ref{app:equiv}, but the essence is that this inference SDE induces the same marginal distribution as (\ref{eq:fixed-inference-sde}), and the generative SDE is its corresponding plug-in reverse. 
Equation (\ref{eq:lambda-sde-pair}) includes the original plug-in reverse SDE (\ref{eq:plug-in-reverse-forward}) and an equivalent ODE as special cases with $\lambda=0$ and $\lambda=1$. 
Denote its corresponding CT-ELBO by $\gE^\infty_\lambda$.
Specifically, (\ref{eq:continuous-elbo-ism}) becomes $\gE^\infty_0$. 
Then we have the following relationship.

\begin{restatable}[\bf Plug-in reverse SDE ELBO, abridged]{thm}{elboscoreshort}
For $\lambda < 1$,
\begin{align}
    \E_{\vy_0}[\gE^\infty_\lambda] = \E_{\vy_0}[\gE^\infty_0] - \left(\frac{\lambda^2}{4(1-\lambda)}\right) \int_{0}^T \E_{\vy_s}\left[\frac{1}{2}||\sc(\vy_s, s) - \nabla \log q(\vy_s, s)||_{g^2}^2\right] \ds
\end{align}
\end{restatable}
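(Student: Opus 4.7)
The plan is to instantiate the CT-ELBO (\ref{eq:continuous:elbo}) for the $\lambda$-indexed generative/inference pair (\ref{eq:lambda-sde-pair}), take the outer expectation over $\vy_0 \sim q(\cdot,0)$, and then lean on two previously established ingredients: (i) the marginal equivalence proved in Appendix~\ref{app:equiv}, which guarantees that every $\lambda$-inference SDE produces the same marginals $q(\cdot,s)$ as the original inference SDE (\ref{eq:fixed-inference-sde}); and (ii) the integration-by-parts identity underlying (\ref{eq:sm-identity}), namely $\E_{q(\cdot,s)}[\nabla\cdot(g^2\sc)] = -\E_{q(\cdot,s)}[\sc^\top g^2 \nabla\log q]$. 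Together these let me convert all path integrals into ordinary $q$-expectations and then reorganize them into a single weighted square.

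First I would read off the generative coefficients $\mu_\lambda = (1-\lambda/2)\,gg^\top\sc - f$ and $\sigma_\lambda = \sqrt{1-\lambda}\,g$ from the first half of (\ref{eq:lambda-sde-pair}). Matching the canonical inference drift $-\mu_\lambda + \sigma_\lambda a_\lambda$ of (\ref{eq:fk-fp-dynamic-reparameterized}) against the drift $f - (\lambda/2)\,g^2\nabla\log q$ prescribed by (\ref{eq:lambda-sde-pair}) forces
\[
a_\lambda \;=\; \frac{1}{\sqrt{1-\lambda}}\Bigl[(1-\lambda/2)\,g\,\sc \;-\; (\lambda/2)\,g\,\nabla\log q\Bigr].
\]
Plugging $\mu_\lambda$ and $a_\lambda$ into (\ref{eq:continuous:elbo}) and taking $\E_{\vy_0}$ on both sides, marginal equivalence implies $\vy_s \sim q(\cdot,s)$ along the $\lambda$-inference path for every $\lambda$. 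Consequently $\E[\log p_0(\vy_T)]$ is $\lambda$-independent and cancels in $\E_{\vy_0}[\gE^\infty_\lambda - \gE^\infty_0]$, while the remaining contributions reduce to $q(\cdot,s)$-expectations of $\|a_\lambda\|^2$ and $\nabla\cdot\mu_\lambda$ against a common measure, regardless of which $\lambda$-law governs the underlying path.

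The rest is bookkeeping. Expanding $\|a_\lambda\|^2 - \|a_0\|^2$ and using $(1-\lambda/2)^2 - (1-\lambda) = \lambda^2/4$ yields three terms with coefficients $\tfrac{\lambda^2/4}{1-\lambda}$, $-\tfrac{\lambda(1-\lambda/2)}{1-\lambda}$, and $\tfrac{\lambda^2/4}{1-\lambda}$ on $\|\sc\|^2_{g^2}$, $\sc^\top g^2\nabla\log q$, and $\|\nabla\log q\|^2_{g^2}$ respectively. The divergence difference is $\nabla\cdot\mu_\lambda - \nabla\cdot\mu_0 = -(\lambda/2)\nabla\cdot(g^2\sc)$, which after integration by parts against $q(\cdot,s)$ supplies an additional cross term with coefficient $-\tfrac{\lambda}{2}$ on $\sc^\top g^2\nabla\log q$. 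Summing the two cross-coefficients collapses them to $\tfrac{\lambda}{2}\bigl[\tfrac{1-\lambda/2}{1-\lambda}-1\bigr] = \tfrac{\lambda^2}{4(1-\lambda)}$, while the squared-norm coefficients remain $-\tfrac{\lambda^2}{8(1-\lambda)}$ each. Factoring out $-\tfrac{\lambda^2}{4(1-\lambda)}$ recognizes the quadratic $\tfrac12\|\sc\|^2_{g^2} - \sc^\top g^2\nabla\log q + \tfrac12\|\nabla\log q\|^2_{g^2} = \tfrac12\|\sc - \nabla\log q\|^2_{g^2}$, giving the stated identity.

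The main obstacle is not the arithmetic but the justification of the reduction to a common measure: the equality of marginals under distinct $\lambda$-laws is what allows me to replace path expectations by $q(\cdot,s)$-integrals and, in turn, to invoke the ESM$\leftrightarrow$ISM integration-by-parts identity under the data measure. The hypothesis $\lambda<1$ is also essential since $\sigma_\lambda = \sqrt{1-\lambda}\,g$ degenerates at $\lambda=1$; this is the ODE limit highlighted in Figure~\ref{fig:equiv-plug-in-lambda-elbo}, where the Girsanov-based CT-ELBO no longer applies and the penalty coefficient $\lambda^2/(4(1-\lambda))$ blows up, consistent with the ODE corresponding to a different (deterministic) likelihood computation.
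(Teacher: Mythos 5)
Your proposal is correct and follows essentially the same route as the paper's proof in Appendix~\ref{app:score-plug-in-likelihood}: read off $\mu_\lambda$, $\sigma_\lambda$, and $a_\lambda$ from (\ref{eq:lambda-sde-pair}), expand $\frac{1}{2}\|a_\lambda\|_2^2$ and $\nabla\cdot\mu_\lambda$ inside the CT-ELBO, and use the ESM--ISM integration-by-parts identity under the common marginal $q(\cdot,s)$ to collapse the cross terms into $\frac{\lambda^2}{4(1-\lambda)}\cdot\frac{1}{2}\|\sc-\nabla\log q\|_{g^2}^2$. The only difference is organizational --- you compute the difference $\gE^\infty_\lambda-\gE^\infty_0$ directly rather than first deriving the full unabridged expression for $\gE^\infty_\lambda$ --- and your explicit appeal to the marginal equivalence of the $\lambda$-inference SDEs is a point the paper leaves implicit.
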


We state the full theorem in Appendix \ref{app:score-plug-in-likelihood}, where we rearrange the terms to show that the average CT-ELBO of the $\lambda$-plug-in reverse SDE is also equivalent to the ISM loss, similarly to (\ref{eq:continuous-elbo-ism}) but up to some multiplying and additive constants. 
The implication is that \emph{while minimizing the score matching loss, we implicitly maximize the likelihood of a continuum of plug-in reverse SDEs which include the ODE as a limiting case ($\lambda\rightarrow1$)}. 
See Figure \ref{fig:equiv-plug-in-lambda-elbo} (right) for illustration. 
This suggests the likelihood of the equivalent ODE can be improved by minimizing the score matching loss, as the ODE's likelihood will be close to plug-in reverse SDEs with $\lambda\approx1$,
which explains the good likelihood of the equivalent ODE reported in \citet{song2021score}.
In practice, we can only estimate the ELBO of the case $\lambda=0$ since otherwise there will be some constant we do not have access to, but their gradients can all be estimated via score matching.

\subsection{Computational trade-off}
\label{subsec:computation}
Having a general framework for estimating the likelihood of diffusion processes allows us to compare a wide family of models, including continuous-time flows and plug-in reverse SDEs trained by score matching. 
We compare the two by measuring the negative ELBO throughout training to highlight their computation-estimation trade-off. 
We train the models on the Swiss roll toy data.
For continuous-time flow, we set $\sigma=0$, using the Hutchinson trace estimator following \citet{grathwohl2018ffjord}.
The ELBO in this case is tight since $a$ will be penalized to be 0. 
We use the \texttt{torchdiffeq} library \citep{chen2018neural} for numerical integration for fairer comparison\footnote{Black-box SDE solvers such as \texttt{torchsde} \citep{li2020scalable} might not be optimized for the deterministic case, since their stochastic adjoint method scales $\gO(L\log L)$ in time whereas deterministic numerical solvers are usually faster. This matters for our runtime comparison.}.  
For plug-in reverse SDE, we train the drift network $a$ using SSM and DSM (for DSM the loss is weighted to reduce variance, which introduces some bias; see the next subsection). 
We use the variance-preserving inference SDE from \citet{song2021score}, which allows us to sample $\vy_s$ using a closed form formula, for $s$ sampled uniformly between $[0,T]$. 
The trained models are visualized in Figure \ref{fig:generative_sde}, the learning curves presented in Figure \ref{fig:ode-sde}.

From the learning curve figures, we see that neg-likelihood decreases rapidly for the continuous-time flow in terms of the number of parameter updates. 
But once the x-axis is normalized by runtime, the convergence speed becomes almost indistinguishable. 
This is because for continuous-time flows, numerical integration takes time, whereas for plug-in reverse SDEs, we train on a random time step $s$; that is, within a fixed amount of time the latter can make more parameter updates at the cost of noisier gradients. 
Note that both models have constant memory cost (wrt $T$ or $L$, the number of integration steps), so a large batch size can be used to reduce variance for training.

\subsection{Bias and variance trade-off}
\label{subsec:bias-variance}
The integral in equation (\ref{eq:continuous-elbo-ism}) can be estimated by sampling $(\vy_s, s)$, and using the Hutchinson trace estimator to estimate the divergence, which corresponds to implicit score matching. 
However, in practice the variance of this estimator is very high when the norm of the Jacobian $\nabla \sc$ is large. 
Another popular approach is to use the denoising estimator (recall the identity from (\ref{eq:sm-identity})), 
\begin{align}
    \E_{\vy_s}\left[\frac{1}{2} ||\sc(\vy_s, s) - \nabla \log q(\vy_s|\vy_0)||_{gg^\top}^2 \,\bigg|\, \vy_0=x\right].
    \label{eq:dsm}
\end{align}
The inference SDE is typically chosen so that $\vy_s$ can be easily sampled, e.g. following $\gN(\mu_s, \sigma_s^2)$, where $\mu_s$ and $\sigma_s$ are functions of $\vy_0$ and $s$. 
In this case, if we reparameterize $\vy_s=\mu_s + \sigma_s \epsilon$ where $\epsilon\sim\gN(0,\mathbf{I})$, then the score becomes
$\nabla \log q = - \frac{\epsilon}{\sigma_s}$. 
Since $\sigma_s\rightarrow0$ as $s\rightarrow0$, this estimator normally has unbounded variance. 
\citet{song2019generative, song2021score} propose to remedy
this by multiplying the DSM loss by $\sigma_s^2/g^2$ (assuming $g$ is a scalar for simplicity), so that the target has constant magnitude on average $\E[\frac{1}{2}||\sigma_s\sc + \epsilon||^2]$, which would result in a biased gradient estimate with much smaller variance. 
We can debias this estimator by sampling $s\sim q(s)\propto g^2/\sigma_s^2$.
This ratio, however, is usally not normalizable in practice (as it integrates to $\infty$). 
As an alternative, we consider the following unnormalized density
$\tilde{q}_\epsilon(s) = g^2(s_\epsilon)/\sigma_{s_\epsilon}^2$ for $s\in[0, s_\epsilon]$, and $\tilde{q}_\epsilon(s) = g^2(s)/\sigma_s^2$ for $s\in[s_\epsilon, T]$.
We experiment with this debiased procedure by sampling $s\sim q_\epsilon\propto\tilde{q}_\epsilon$, for $f$ and $g$ chosen to be the variance-preserving SDE. 
$s_\epsilon$ is small so that the bias is negligible. 

We train the model on MNIST \citep{lecun1998gradient} and CIFAR10 \citep{krizhevsky2009learning}. 
We present the learning curves and the standard error of the estimate of the ELBO in Figure \ref{fig:debias}.
The lower bound is estimated using the Hutchinson trace estimator with $s$ sampled uniformly from $[0,T]$, with the same batch size, so the only thing that will affect the dispersion is the magnitude of $\nabla \sc$.
Since smaller values of $s$ are more likely to be sampled under $q_\epsilon$, the debiased model will see samples with less perturbation more often.
On the contrary, sampling $s$ uniformly will bias the model to learn from noisier data, causing the learned score to be smoother. 
We also experiment with parameterizing $\sc$ vs parameterizing $a$. 
We find the latter parameterization to be helpful since the relationship $\sc=g^{-1}a$ has the effect of negating the multiplier $\sigma_s$ in the reweighted loss, \ie $\E[\frac{1}{2}||\frac{\sigma_s}{g}a + \epsilon||^2]$. 
This is similar to the noise conditioning technique introduced in \citet{song2020improved}. 

\begin{figure}[t]
    \centering
    \includegraphics[width=0.49\linewidth]{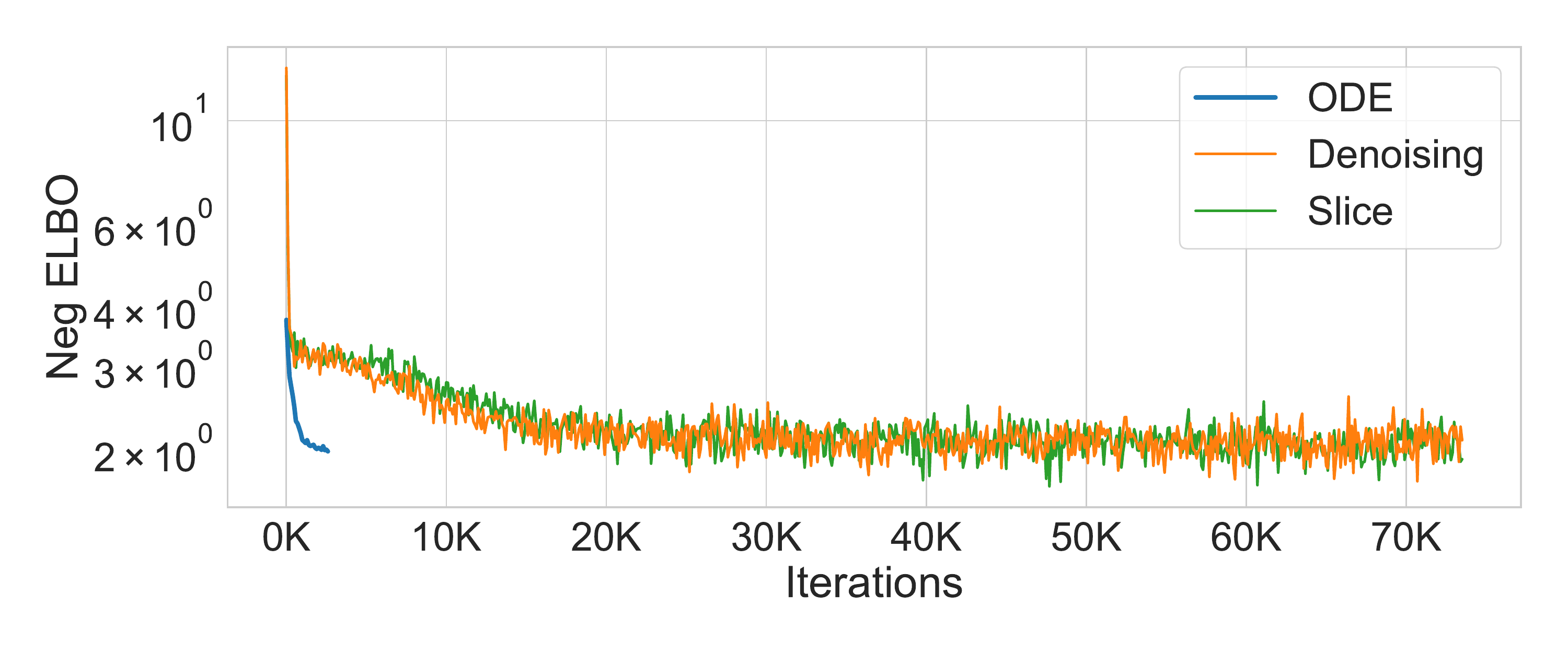}
    \hfill
    \includegraphics[width=0.49\linewidth]{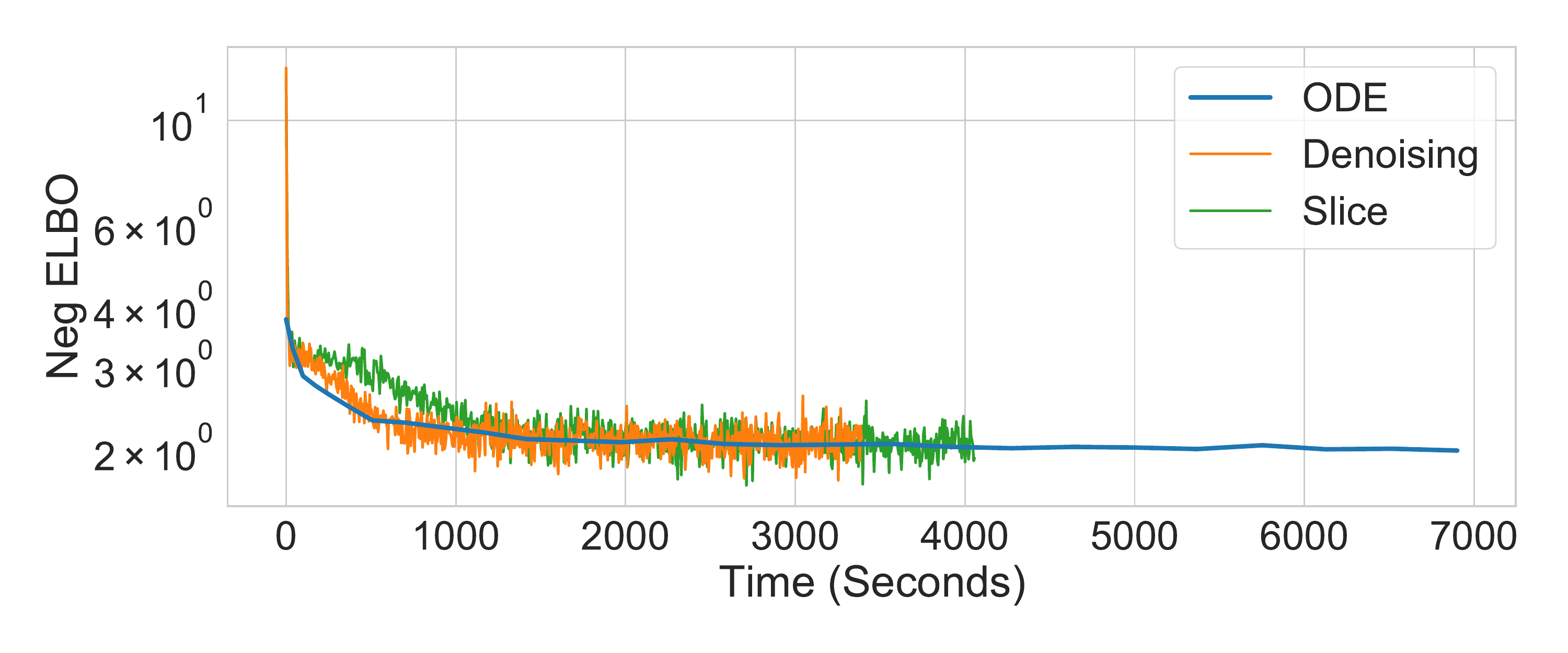}
    \caption{Neural ODE vs plug-in reverse SDE (denoising or slice score matching). The learning curves are presented as a function of iterations (left) and runtime (right) to emphasize the computational distinction between the two families of models.}
    \label{fig:ode-sde}
\end{figure}
\begin{figure}
    \centering
    \includegraphics[width=0.49\linewidth]{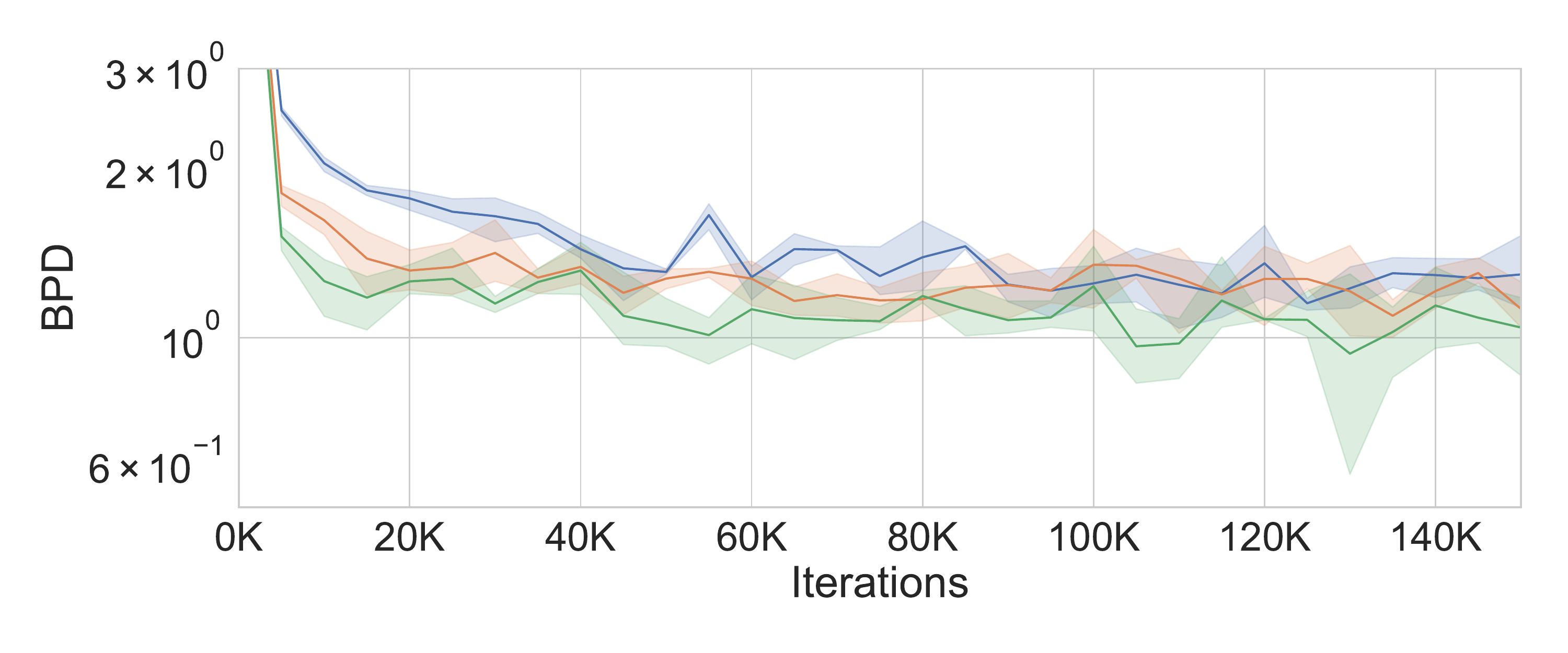}
    \hfill
    \includegraphics[width=0.49\linewidth]{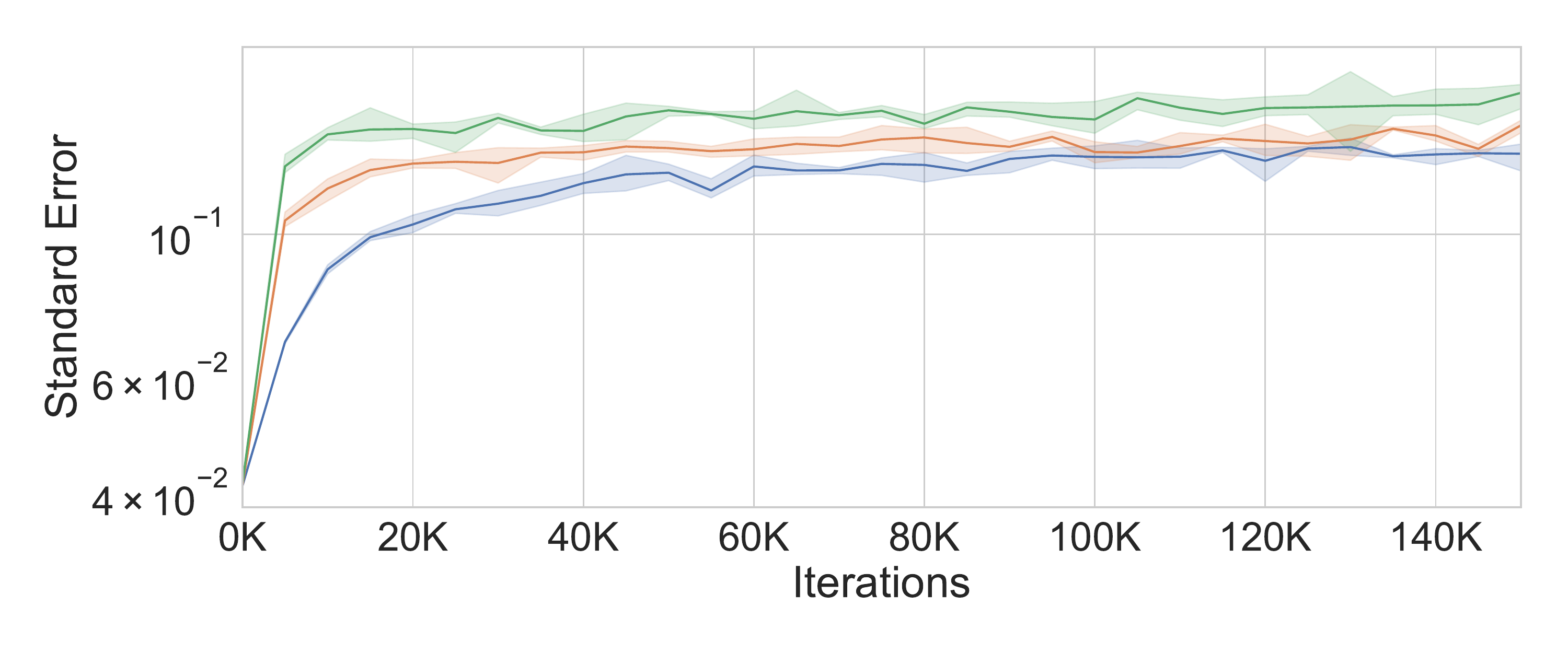}
    \includegraphics[width=0.49\linewidth]{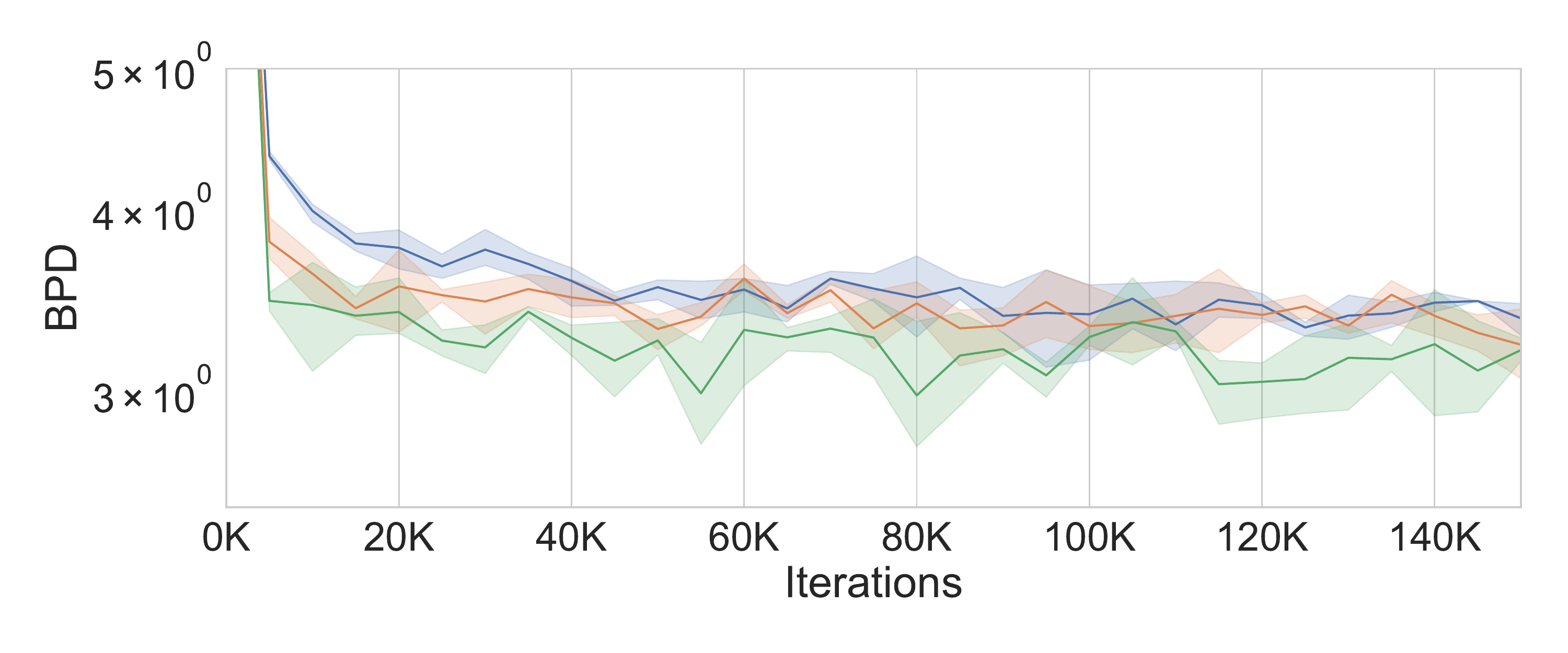}
    \hfill
    \includegraphics[width=0.49\linewidth]{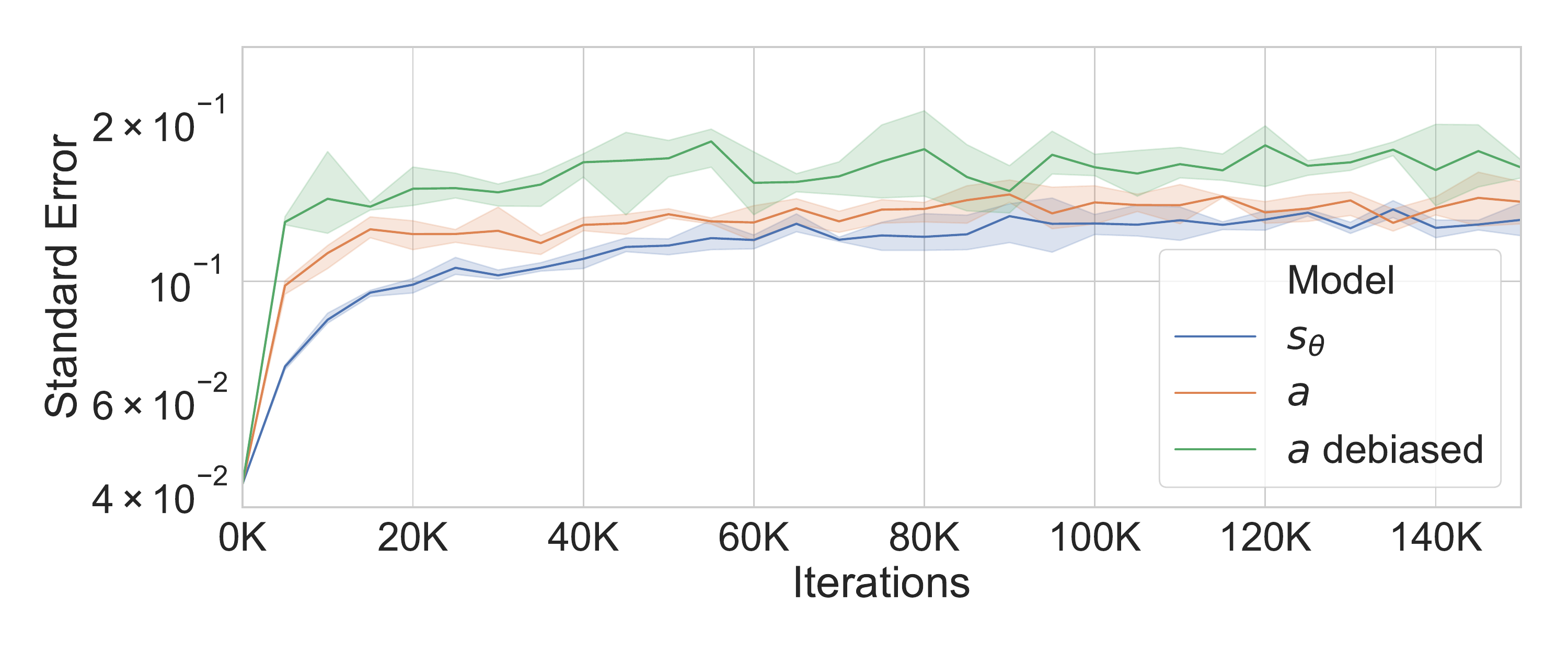}
    \caption{Likelihood estimation on MNIST (first row) and CIFAR10 (second row). $\sc$ and $a$ denote which model we parameterize. Y-axes are bits-per-dim and the standard error of BPD of the test set.
    The debiased curves improve upon the original biased gradient estimator \citep{song2021score} since it maximizes a proper ELBO. 
    Shaded area reflects the uncertainty estimated by 3 random seeds.}
    \label{fig:debias}
\end{figure}

\section{Related work}
\label{sec:related}

\paragraph{Diffusion-based generative models}
Our work lays a theoretical foundation for \citet{song2021score}, which recognizes that conditional denoising score matching \citep{song2019generative, song2020improved} and discrete-time diffusion-based generative models \citep{sohl2015deep, goyal2017variational, ho2020denoising} can be viewed as learning to revert an inference process (using the plug-in reverse SDE).
Different from \citet{ho2020denoising}, which shows the ELBO of discrete time diffusion process can be likened to DSM (Section 3.2 of the paper), we show that ISM loss naturally arises from the Fokker-Planck equation of the marginal density, via the Fenman-Kac representation and the Girsanov change of measure. 
This line of work has been successfully applied to modeling high dimensional natural images \citep{dhariwal2021diffusion, saharia2021image}, audio \citep{kong2020diffwave}, 3D point cloud \citep{cai2020learning, zhou20213d}, and discrete data \citep{hoogeboom2021argmax}.  

\paragraph{Time-reversal of diffusion processes}
Plenty of works have studied the reverse-time diffusion processes (\ref{eq:reverse-sde}), including \citet{anderson1982reverse, follmer1985entropy, elliott1985reverse, haussmann1986time}. 
These are different from our marginal-equivalent (reverse) processes (\ref{eq:lambda-sde-pair}) when $\sc=\nabla \log q$, since the latter is related by the marginals only.

\paragraph{Score matching for energy-based models}
Besides the connection to diffusion models, score matching is also often used as a method for learning energy based models (EBM)---~see \citet{song2021train} for a comprehensive review on useful techniques---. 
When used as an EBM, sampling from the conditional score model can be achieved by running the annealed Langevin diffusion \citep{neal2001annealed}, which is connected to free-energy estimaton in physics \citep{jarzynski1997equilibrium}, wherein the \emph{path integral} is essentially a Feynman Kac representation.

\paragraph{De Bruijn's identity}
To connect maximum likelihood and score matching, \citet{durkan2021maximum} shows that KL divergence can be represented as an integral of weighted Fisher divergence, generalizing the case of \citet{lyu2009interpretation} where the inference perturbation is a simple Brownian motion. 
This type of formulas fall into the category of de Bruijn's identity \citep{cover1999elements} for relative entropy.
A similar differential form result can be found in \citet{wibisono2017information}. 

\paragraph{Learning SDEs}
\citet{tzen2019neural, li2020scalable} also propose to learn a neural SDE by applying Girsanov's theorem. 
The key difference is that they treat the SDE entirely as a latent variable, with an additional emission probability, whereas we use the Feynman-Kac formula to directly express the marginal density as an expectation, side-stepping the need to smooth out the density using the emission probability (which will be a Dirac point mass in our case).  
In their case, the inference direction is the same as the generative direction, since they infer the latent SDE directly, whereas we apply Girsanov to the Feynman-Kac diffusion (opposite the generative direction). 
\citet{xu2021infinitely} further apply neural SDE as an infinitely deep Bayesian neural network.

\section{Conclusion and Discussion}
In this work, we derive a general variational framework for estimating the marginal likelihood of continuous-time diffusion models.
This framework allows us to study a wide spectrum of models, including continuous-time normalizing flows and score-based generative models.
Using our framework, we show that performing score matching with a particular choice of mixture weighting is equivalent to maximizing a lower bound on the marginal likelihood of a family of plug-in reverse SDEs, of which the one used in \citet{song2021score} and the equivalent ODE are special cases. 
Empirically, we validate our theory by monitoring the ELBO while performing score matching, and discuss the implication of the choice of mixture weighting and the potential of debiasing via non-uniform sampling. 
We emphasize that our theory does not explain the impressive sample quality of this family of models, which is still an open research problem and we leave it for future work. 

This work introduces a general framework to estimate the likelihood of diffusion-based models, which allows the parameters of both the generative and inference SDEs to be learned, using a numerical solver with constant memory cost (as per Remark \ref{rm:computation}). 
The training time can be reduced via the connection to score matching and the reverse-time parameterization (\ref{eq:plug-in-reverse-forward}) as long as $f$ and $g$ take a simple form (so that $\vy_t$ can be sampled without numerical integration). 
For example, one can generalize the Ornstein-Uhlenbeck process to have non-linear (in time) $f$ and $g$, similar to the variance-presering SDE, by parameterizing the integral of $f$ using a monotone network \citep{sill1998monotonic,kay2000estimating,daniels2010monotone,huang2018neural}. 
This has been explored in a concurrent work by \citet{kingma2021variational} in a different framework.

\section*{Acknowledgements}
We would like to thank 
David Kanaa, Ricky Chen, Simon Verret, Rémi Piché-Taillefer, Alexia Jolicoeur-Martineau,  and Faruk Ahmed for giving their feedback on this manuscript. 
We would also like to thank the INNF+ 2021 reviewers and NeurIPS 2021 reviewers for their constructive suggestions, which help us improve the clarity of the paper. 
Chin-Wei is supported by the Google PhD fellowship.

We also acknowledge the Python community \citep{van1995python, oliphant2007python} for developing the tools that enabled this work, including
numpy \citep{oliphant2006guide,van2011numpy, walt2011numpy, harris2020array},
PyTorch \citep{paszke2019pytorch},
Matplotlib \citep{hunter2007matplotlib},
seaborn \citep{seaborn}, and
SciPy \citep{jones2014scipy}.

\bibliography{ref}
\bibliographystyle{icml2021}

\ifpreprint

\else

\newpage
\section*{Checklist}

\begin{enumerate}

\item For all authors...
\begin{enumerate}
  \item Do the main claims made in the abstract and introduction accurately reflect the paper's contributions and scope?
    \answerYes{}
  \item Did you describe the limitations of your work?
    \answerYes{}
  \item Did you discuss any potential negative societal impacts of your work?
    \answerNA{}
  \item Have you read the ethics review guidelines and ensured that your paper conforms to them?
    \answerYes{}
\end{enumerate}

\item If you are including theoretical results...
\begin{enumerate}
  \item Did you state the full set of assumptions of all theoretical results?
    \answerYes{}
	\item Did you include complete proofs of all theoretical results?
    \answerYes{}
\end{enumerate}

\item If you ran experiments...
\begin{enumerate}
  \item Did you include the code, data, and instructions needed to reproduce the main experimental results (either in the supplemental material or as a URL)?
    \answerYes{}
  \item Did you specify all the training details (e.g., data splits, hyperparameters, how they were chosen)?
    \answerYes{}
	\item Did you report error bars (e.g., with respect to the random seed after running experiments multiple times)?
    \answerYes{}
	\item Did you include the total amount of compute and the type of resources used (e.g., type of GPUs, internal cluster, or cloud provider)?
    \answerNA{}
\end{enumerate}

\item If you are using existing assets (e.g., code, data, models) or curating/releasing new assets...
\begin{enumerate}
  \item If your work uses existing assets, did you cite the creators?
    \answerYes{}
  \item Did you mention the license of the assets?
    \answerNA{}
  \item Did you include any new assets either in the supplemental material or as a URL?
    \answerNA{}
  \item Did you discuss whether and how consent was obtained from people whose data you're using/curating?
    \answerNA{}
  \item Did you discuss whether the data you are using/curating contains personally identifiable information or offensive content?
    \answerNA{}
\end{enumerate}

\item If you used crowdsourcing or conducted research with human subjects...
\begin{enumerate}
  \item Did you include the full text of instructions given to participants and screenshots, if applicable?
    \answerNA{}
  \item Did you describe any potential participant risks, with links to Institutional Review Board (IRB) approvals, if applicable?
    \answerNA{}
  \item Did you include the estimated hourly wage paid to participants and the total amount spent on participant compensation?
    \answerNA{}
\end{enumerate}

\end{enumerate}

\fi

\newpage
\appendix

\section{Score matching losses}
\label{app:score-matching}
In this section, we prove the score matching loss identity for completeness.
These proofs are adapted from \citet{hyvarinen2005estimation, song2020sliced, vincent2011connection} with slight modifications since we project the score onto the eigen-basis of $\Lambda(s)$. 
Recall the definition of the ESM loss
\begin{align}
    \esm =  \E_{\vy_s}\left[\frac{1}{2}||\sc(\vy_s , s) - \nabla \log q(\vy_s, s) ||_{\Lambda(s)}^2\right]
\end{align}
where $\vy_s \sim q(\vy_s, s)$.
Expanding the quadratic equation, we have
\begin{align}
    \esm =  \E_{\vy_s}\left[\frac{1}{2}||\sc(\vy_s , s)||_{\Lambda(s)}^2 - \sc(\vy_s , s)^\top \Lambda(s) \nabla \log q(\vy_s, s) + \frac{1}{2}||\nabla \log q(\vy_s, s) ||_{\Lambda(s)}^2\right]
\end{align}
Moving $\gI(q(\vy_s, s))$ from the RHS to the LHS gives us
\begin{align}
    \esm - \frac{1}{2}\gI(q(\vy)s, s))=  \E_{\vy_s}\left[\frac{1}{2}||\sc(\vy_s , s)||_{\Lambda(s)}^2 - \sc(\vy_s , s)^\top \Lambda(s) \nabla \log q(\vy_s, s) \right]
\end{align}

\subsection{Implicit score matching}
Now to draw connection to ISM, we apply integration by parts and the general Stokes' theorem (with mild regularity condition on $\sc$) to the inner product term to obtain
\begin{align*}
    \int q(y, s) \sc(y, s)^\top \Lambda(s) \nabla\log q(y, s) \, \mathrm{d} y
    &= \int \sc(y, s)^\top \Lambda(s) \nabla q(y, s) \, \mathrm{d} y \\ 
    &= \cancelto{0}{\int \nabla \cdot \left(q \Lambda^\top \sc \right) \, \mathrm{d} y} 
    - \int q\nabla \cdot \left(\Lambda^\top \sc \right)
    \, \mathrm{d} y \\ 
    &= \E_{\vy_s}\left[\nabla\cdot \left(\Lambda^\top \sc \right)\right]
\end{align*}

\subsection{Sliced score matching}
For the SSM loss, use the Hutchinson trace estimator \citep{hutchinson1989stochastic} to replace the divergence operator, which is simply the trace of the Jacobian matrix.

\subsection{Denoising score matching}
For DSM, similarly we first look at the inner product term
\begin{align*}
    \int q(y, s) \sc(y, s)^\top \Lambda(s) \nabla\log q(y, s) \, \mathrm{d} y
    &= \int  \sc(y, s)^\top \Lambda(s) \nabla q(y, s) \, \mathrm{d} y \\
    &= \int  \sc(y_s, s)^\top \Lambda(y_s) \nabla \int q(y_s | y_0) q(y_0, 0)\, \mathrm{d} y_0 \, \mathrm{d} y_s \\ 
    &= \int \int q(y_0, 0) \sc^\top \Lambda \nabla  q(y_s | y_0) \, \mathrm{d} y_s \, \mathrm{d} y_0\\ 
    &= \int \int q(y_0, 0)q(y_s|y_0) \sc^\top \Lambda \nabla \log q(y_s | y_0) \, \mathrm{d} y_s \, \mathrm{d} y_0\\ 
    &= \E_{\vy_0, \vy_s}\left[\sc^\top \Lambda \nabla \log q(\vy_s|\vy_0)\right]
\end{align*}
where $q(y_s|y_0)$ denotes the conditional density of $\vy_s$ given $\vy_0$. 
Combining this with $\E_{\vy_s}[||\sc||_\Lambda^2]$, we have
\begin{align*}
    \resizebox{0.99\textwidth}{!}{
    $\E_{\vy_0,\vy_s}\left[\frac{1}{2}||\sc||_\Lambda^2 - \sc^\top \Lambda \nabla \log q(\vy_s|\vy_0)\right]
    = \E_{\vy_0,\vy_s}\left[\frac{1}{2}||\sc - \nabla \log q(\vy_s|\vy_0)||_\Lambda^2\right]  - \frac{1}{2} \E_{\vy_0} [\gI(q(y_s | \vy_0))]
    $}
\end{align*}

\newpage

\section{Assumption of Feynman-Kac}
\label{app:fk-assumption}
\begin{restatable}[\bf Feynman-Kac]{assmp}{assmp:feynmankac}
We assume the following: There exist some constants 
$B_h, B_v >0$ and $p_h, p_v \geq 1$ such that 
$h \in C^0(\R^d)$ and $v\in C^{2, 1}(\R^d\times [0,T])$ satisfy
\begin{equation}
    | h(y) | \le B_h\left(1 + || y ||^{2p_{h}}\right) \,\textnormal{ or }\, h(y) \ge 0 
\end{equation}
\begin{equation}
    \max_{0\le \varsigma \le T} |v(y, \varsigma)| \le B_v \left(1 + || y ||^{2p_v}\right) 
\end{equation}
\end{restatable}

\section{Mixture of continuous-time flows}
\label{app:mixture}
Continuing the discussion in Remark \ref{rm:marginalization}, we analyze the limit of the determinant of the Jacobian of the finite approximation given the Brownian path: $x\leftarrow x + \mu(x, t)\Delta t +\sigma(t) \Delta B_i$. 
When the step size decreases to $0$, this should converge to the It\^o integral.
When $\Delta t$ is small enough, under the assumption that $\mu$ is uniformly Lipschitz, the finite approximation will be invertible for all steps. 
Then the determinant of the Jacobian of the overall transformation is just the product of determinant of each step:
\begin{align*}
    \prod_i \det \big(\nabla \left(x + \mu(x, t)\Delta t + \sigma(t)\Delta B_i\right)\big) 
    &= \prod_i \det \big( \mathbf{I} + \Delta t \nabla \mu \big) \\
    &= \prod_i \big( 1 + \Delta t \Tr(\nabla\mu) + \gO(\Delta t^2) \big) \\
    &= \exp\left( \sum_i  \log \big( 1 + \Delta t \nabla\cdot\mu + \gO(\Delta t^2) \big) \right) \\
    &= \exp\left( \sum_i   \Delta t \nabla\cdot\mu + \gO(\Delta t^2) \right) \\
    &\rightarrow \exp\left( \int \nabla\cdot \mu \right) \, \text{ as }\Delta t \rightarrow 0
\end{align*}
This leads to the same derivation for the instantaneous change of variable formula for continuous time flow \citep{chen2018neural}, but the argument of $\mu$ will be the solution of the It\^o integral, instead of the solution of the deterministic dynamics only.

\section{Marginal density of diffusion models (general case)}
\label{app:density}
In Section \ref{sec:density}, we assume $\sigma$ is position-independent for simplicity.
The general case of the Fokker-Planck equation can also be represented using the Feynmann-Kac formula. 
Following a similar conversion as in Table \ref{tab:fk-fp}, we have
$$
p(x, T) = 
\E\left[
p_0(\vy_T)\exp\left(\int_0^T -\nabla\cdot\mu(\vy_s,T-s) + \sum_{i,j}\partxxij{i}{j} D_{ij}(\vy_s,T-s) \ds \right)
\,\bigg|\, \vy_0=x
\right]
$$
where $\vy_s$ solves
$$
\mathrm{d}\vy = - \tilde{\mu}(\vy, T-s) \ds + \sigma(\vy, T-s) \,\mathrm{d}B_s'
$$
where
$\tilde{\mu}(y, s)_i := \mu_i(y,s) - 2 \sum_j \partxj{i}D_{ij}(y,s)$.

\section{One-dimensional explanation of Girsanov and variational inference}
\label{app:1d-girsanov}
The Girsanov theorem (aka the Cameron–Martin-Girsanov theorem) describes how translation affects Wiener (Gaussian) measures. 
In Section \ref{sec:inference}, we deal with the infinite dimensional case, therefore demanding a formal measure-theoretic treatment. 
In this section, we use a one-dimensional case to illustrate how to interpret the Girsanov theorem and how we use it to derive the CT-ELBO, using the more familiar notion of probability density functions. 
Now imagine we do not have an infinite-dimensional latent variable (\ie the Brownian motion $B'$).
Instead, imagine we have a one-dimensional latent variable $\epsilon'$ following a standard normal distribution.
One can think about it as a VAE.
This way, instead of having a classical Wiener measure (\ie the distribution of Brownian motion) we would only need to deal with the standard Gaussian distribution, then $\sP$ in (\ref{eq:elbo}) has density $p=\gN(0,1)$.
Suppose $\sQ$ also has density $q$
Then we can rewrite (\ref{eq:elbo}) using the more familiar density ratio
$$\E_q\left[\log \frac{p}{q} + \,\cdots \,\bigg| \, \cdots \right]$$

Recall $p(\epsilon')=\gN(\epsilon'; 0, 1) = \frac{1}{\sqrt{2\pi}}e^{-\frac{1}{2} \epsilon'^2}$.
If we translate this density by $a$ and let it be $q$, we have
$$q(\epsilon') = \frac{1}{2\pi}e^{-\frac{1}{2}(\epsilon' - a)^2}$$
This definition of $q$ gives us the density ratio
$$\frac{q(\epsilon')}{p(\epsilon')}=e^{a\epsilon' - \frac{1}{2}a^2}$$
Also, under the density $q$, 
$$\hat{\epsilon}:=\epsilon'-a$$ 
is again a standard normal random variable (which means $\epsilon'$ is a Gaussian random variable with mean $a$). 
Note the striking resemblance between the last two formulas and (\ref{eq:girsanov-standardize},\ref{eq:girsanov-density}).

Now if we want to use this $q$ to perform inference as well as reparameterization, we simply just invert the standardization formula, by first sampling $\hat{\epsilon}$ from the standard normal distribution, and letting $\epsilon'=\hat{\epsilon} + a$.
Under this reparameterization, the log-likelihood ratio $\log p/q$ in the ELBO becomes
$$- a\epsilon' + \frac{1}{2} a^2 
= - a(\hat{\epsilon} + a) + \frac{1}{2} a^2 
= - a\hat{\epsilon} - \frac{1}{2} a^2 $$

Note that since $\hat{\epsilon}$ is the standard normal (under $q$), the first term is equal to $0$ in expectation.
This derivation leads to the CT-ELBO in (\ref{eq:continuous:elbo}). 
See Section \ref{app:proofs} for the formal proof.

\section{Proofs}
\label{app:proofs}

\ctelbo*
\begin{proof}
By inverting the relationship (\ref{eq:girsanov-standardize}), we have
$$\mathrm{d}B_s' = \mathrm{d}\hat{B}_s + a(\omega, s) \ds$$

This allows us to reparameterize $\vy_s$ as 
$$\mathrm{d}\vy = -\mu \ds + \sigma \,\mathrm{d}B_s' = -\mu \ds + \sigma (\mathrm{d}\hat{B}_s + a \ds) = (-\mu + \sigma a) \ds + \sigma +\mathrm{d}\hat{B}_s$$

The log density can be written as 
\begin{align*}
\log \frac{\mathrm{d}\sP}{\mathrm{d}\sQ} 
&= - \int_0^T a \cdot \mathrm{d}B_s' + \frac{1}{2}\int_0^T ||a||^2 \ds \\
&= - \int_0^T a \cdot (\mathrm{d}\hat{B}_s + a \ds) + \frac{1}{2}\int_0^T ||a||^2 \ds \\
&= - \int_0^T a \cdot \mathrm{d}\hat{B}_s - \frac{1}{2}\int_0^T ||a||^2 \ds    
\end{align*}

Finally, since the first term is in expectation equal to zero \citep[Theorem 3.2.1]{oksendal2003stochastic}, we conclude the proof.
\end{proof}

\optinf*
\begin{proof}
To characterize the variational gap, we directly subtract the lower bound from the marginal likelihood:
\begin{align*}
    \log p(x, T) - \gE^\infty 
    =
    \E\left[ 
    \log p(\vy_0, T) - \log p(\vy_T, 0) + \frac{1}{2} \int_0^T ||a(\omega, s)||_2^2 \,ds  + \int_0^T \nabla\cdot\mu \,ds
    \, \Bigg\vert \, \vy_0=x
    \right] 
\end{align*}

The first two terms can be written as an integral
\begin{align}
\log p(\vy_0, T) - \log p(\vy_T, 0 ) = - \int_0^T \mathrm{d} \log p(\vy_s, T-s)
\end{align}

Using It\^{o}'s formula, we can rewrite the differential as
\begin{align*}
    \mathrm{d} \log p(\vy_s, T-s)
    = - \frac{\parts p(\vy_s, T-s)}{p(\vy_s, T-s)} \ds
    + \nabla \log p \cdot \mathrm{d} \vy_s 
    + \frac{1}{2} H_{\log p} : \mathrm{d} \vy_s \mathrm{d} \vy_s ^\top 
\end{align*}
where $\mathrm{d} \vy_s \mathrm{d} \vy_s ^\top = \sigma \sigma^\top \ds$. 

After rearrangement, we have
\begin{align}
\int_0^T \left[\frac{\parts p}{p}
- \nabla \log p^\top (-\mu + \sigma a)
- \frac{1}{2} H_{\log p} : \sigma \sigma^\top + \frac{1}{2} ||a||^2  + \nabla\cdot \mu \right] \ds - \int_0^T \nabla \log p ^\top \sigma \,\mathrm{d}\hat{B}_s 
\end{align}
where the second term is equal to $0$ in expectation. 

Now using the Fokker Planck equation to expand $\parts p$, with further rearrangement and cancellation and by a final application of the conditional Fubini's theorem, we end us with the desired characterization of the gap.

\end{proof}

\consistency*

\begin{proof}
By definition of the log transitional distributions
\begin{align}
    \log p(x_{i+1} | x_i) 
    = -\frac{d}{2} \log 2\pi - \log \det(\tilde{\sigma}_i) - \frac{1}{2} ||x_{i+1} - \tilde{\mu}_i(x_i)||_{\tilde{\sigma}_i^{-2}}^2
\end{align}

Using the definition of $\tilde{\mu}_i$, the quadratic term becomes
$$||x_{i+1} - x_i - \Delta t \mu(x_i, i\Delta t)||_{\tilde{\sigma}_i^{-2}}^2$$

Due the the Gaussian reparameterization (under $q$),
we can write 
\begin{align}
    x_i &= \hat{\mu}_{i+1}(x_{i+1}) + \hat{\sigma}_{i+1}\epsilon \\
    &= x_{i+1} + \Delta t \big(- \mu(x_{i+1}, (i+1)\Delta t) \nonumber\\ 
    &\qquad \qquad \qquad + \sigma((i+1)\Delta t) a(x_{i+1},T-(i+1)\Delta t)\big) + \sqrt{\Delta t}\sigma((i+1)\Delta t) \epsilon \nonumber
\end{align}

Plugging this into the quadratic term yields
\begin{align}
    ||\cdots||^2 &= ||\Delta t (\mu(x_{i+1}, (i+1)\Delta t) - \mu(x_i, i\Delta t)) \\
    &\qquad\quad - \Delta t \sigma((i+1)\Delta t) a(x_{i+1},T-(i+1)\Delta t)
    - \sqrt{\Delta t}\sigma((i+1)\Delta t) \epsilon ||^2 \nonumber
\end{align}

We take care of the deviation in $\mu$ first, by taking the Taylor expansion around $(x_i, i\Delta t)$:
\begin{align}
    \mu(x_{i+1}, &(i+1)\Delta t) 
    = \mu(x_i, i\Delta t) + \nabla \mu(x_i, i\Delta t)^\top (x_{i+1} - x_i) + \gO(\Delta t)
\end{align}

Note that the first order term wrt the time variable is also $\gO(\Delta t)$, so it's absorbed into the remainder.
Combining the last three identities, we have
\begin{align}
    \frac{1}{2}||&x_{i+1} - \tilde{\mu}_i(x_i)||_{\tilde{\sigma}_i^{-2}}^2 \\
    &= \frac{1}{2} \epsilon^\top \sigma^\top (\sigma\sigma^\top)^{-1} \sigma \epsilon
    + \Delta t \epsilon^\top \sigma^\top \nabla\mu^\top (\sigma\sigma^\top)^{-1} \sigma \epsilon 
    + \frac{1}{2} \Delta t a^\top \sigma^\top (\sigma\sigma^\top)^{-1} \sigma a \nonumber\\
    &\qquad + o(\Delta t) + 
    (\Delta t)^{1/2} \epsilon^\top \sigma^\top (\sigma\sigma^\top)^{-1} \sigma a \nonumber
\end{align}

Note that we've dropped the arguments of the functions for notational convenience. 
All the $\sigma$s in the denominator are $\sigma(i\Delta t)$.
The $o(\Delta t)$ term can be neglected since it decays fast enough even though there are $L=1/\Delta t$ of them. 
The last term is $0$ in expectation since $\epsilon$ is Gaussian distributed. 
To take care of the first term (*), we turn to the log density of the inference model.
\begin{align}
    \log q(x_{i} | x_{i+1}) 
    &= -\frac{d}{2} \log 2\pi - \log \det(\hat{\sigma}_{i+1}) - \frac{1}{2} ||x_{i} - \hat{\mu}_{i+1}(x_{i+1})||_{\hat{\sigma}_{i+1}^{-2}}^2 \\
    &= -\frac{d}{2} \log 2\pi - \log \det(\hat{\sigma}_{i+1}) - \frac{1}{2} ||\hat{\sigma}_{i+1}\epsilon||_{\hat{\sigma}_{i+1}^{-2}}^2
\end{align}

Comparing the third term with (*), we have 
\begin{align}
    \frac{1}{2} \epsilon^\top \sigma_{i+1}^\top \left((\sigma_{i+1}\sigma_{i+1}^\top)^{-1} - (\sigma_{i}\sigma_{i}^\top)^{-1} \right) \sigma_{i+1} \epsilon,
\end{align}
where $\sigma_i := \sigma(i\Delta)$.
Using the differential notation, in expectation, the above can be rewritten as 
\begin{align}
    \E\left[ \frac{1}{2} \epsilon^\top \sigma^\top \left(\partt (\sigma\sigma^\top)^{-1} \right) \sigma \epsilon \right] \,d t
    &= - \tr ( \sigma^{-1} \partt \sigma ) \,d t  = - \partt \log \det( \sigma ) \,d t,
\end{align}
where we used Hutchinson's trace identity and Jacobi's formula. 
Therefore, the summation of the differences will converge to $\log\det(\sigma(0)) - \log\det(\sigma(T))$. 
This quantity will be negated by summing up the differences between the normalizing constants for all $L$ terms, which gives us $\log \det (\sigma(T)) - \log \det (\sigma(0))$, by the telescoping cancellation. 

Now we only have two terms from the quadratic function, which will converge to 
$$\epsilon^\top \sigma^{\top} \nabla\mu^\top \sigma^{-\top} \epsilon \,d t
+\frac{1}{2} ||a||^2 \, d t
$$
Using the trace identity again, and the fact that trace is similarity-invariant, we see that the above quantity is equal to
$$ \left(\nabla \cdot \mu + \frac{1}{2} ||a||^2\right) \,dt $$
in expectation.
Now summing up all the layers, we can decompose the approximate error as
\begin{align}
    &&\left| \E\left[\sum \log \frac{p}{q}\right] - \E\left[ - \int \left(\nabla\cdot\mu + \frac{1}{2}||a||^2\right)\right] \right| \,\leq\, \left| \E\left[\sum \log \frac{p}{q} + 
    \sum \left(\nabla\cdot\mu + \frac{1}{2}||a||^2\right)\Delta t \right] \right|\,\, \nonumber \\
    &&+ \E\left[ \left| \sum \left(\nabla\cdot\mu + \frac{1}{2}||a||^2\right)\Delta t  - \int \left(\nabla\cdot\mu + \frac{1}{2}||a||^2\right) \right| \right] \,\,\nonumber
\end{align}

As all the approximation errors are bounded and converge to 0 as $L\rightarrow\infty$, the first term goes to 0 by the \emph{Dominated Convergence Theorem}.  
The assumption on the coefficients also guarantees the convergence in mean square error \citep{mil1975approximate} of the Euler Maruyama scheme, which implies the second term goes to 0. 
The same applies to the last step for the prior term: $x_0\rightarrow y(T)$ in $L^2$.

\end{proof}

\section{Equivalent SDEs}
\label{app:equiv}
We use the following definition to formalize what we mean by equivalent SDEs
\begin{restatable}[\textbf{Equivalent processes / SDEs}]{defo}{equiv}
Let $\vy_s$, $\tilde{\vy}_s$ and $\vx_t$ be stochastic processes for $0\leq s, t\leq T$. 
If $\vy_s$ and $\tilde{\vy}_s$ have the same distribution for all $s$, then they are said to be equivalent. 
If $\vx_t$ and $\vy_{T-t}$ have the same distribution for all $t$, then we say $\vx_t$ is an equivalent reverse process. 
Two SDEs are equivalent if the processes they induce are equivalent.
Two SDEs are equivalent reverse of each other if the processes they induce are equivalent reverse of one another.
\end{restatable}
Note that when talking about the equivalency between SDEs, the dependency on an initial condition is implied.

In this section, we show how to construct a family of equivalent (reverse) SDEs. 
Let $\vy_s$ be a diffusion process solving
$$\mathrm{d}\vy = f \ds + g \,\mathrm{d}\hat{B}_s$$

We assume $g$ is position-independent and diagonal for simplicity.
Let $\lambda\leq 1$, 
We can rearrange the Fokker-Planck equation to get
\begin{align}
    \parts{q} = - \nabla\cdot \left( fq \right) + \frac{1}{2}g^2 : H_q =  - \nabla\cdot \left( \left(f - \frac{\lambda}{2}g^2 \nabla\log q \right)q \right) + \frac{1-\lambda}{2}g^2 : H_q
    \label{eq:fp-equiv}
\end{align}

Now let $f_\lambda := f-\frac{\lambda}{2}g^2\nabla\log q$, and $g_\lambda := \sqrt{1-\lambda} g$. 
Then the SDE $\mathrm{d}\vy = f_\lambda \ds + g_\lambda\,\mathrm{d}\hat{B}_s$ has the same Fokker Planck equation as (\ref{eq:fp-equiv}), which means the SDEs defined this way form a family of equivalent SDEs\footnote{Note that more generally the same would also hold if we let $\lambda$ be a time-dependent function.}.

Now to construct an equivalent reverse SDE, we rearrange the Fokker Planck of this new SDE,
\begin{align}
    \parts{q} = -\nabla\cdot(f_\lambda q) + \frac{1}{2}g_\lambda^2:H_q = - \nabla\cdot \left( \left(f_\lambda - g_\lambda^2 \nabla\log q \right)q \right) - \frac{1}{2}g_\lambda^2 : H_q
    \label{eq:fp-equiv-forward}
\end{align}
Now let $\mu_\lambda(x, t):=g^2_\lambda(x,T-t)\nabla \log q(x, T-t) - f_\lambda(x, T-t)$ and $\sigma_\lambda = g_\lambda(x, T-t)$. 
Then the SDE $\mathrm{d}\vx = \mu_\lambda \dt + \sigma_\lambda \,\mathrm{d}B_t$ with the initial condition $\vx_0\sim q(\cdot, T)$ is an equivalent reverse SDE, since \begin{align}
    \partt{p} = -\nabla\cdot(\mu_\lambda p) + \frac{1}{2}\sigma_\lambda^2:H_p =
    \nabla\cdot \left( \left(f_\lambda - g_\lambda^2 \nabla\log q \right)p \right) + \frac{1}{2}g_\lambda^2 : H_p
\end{align}
is the time-reversal of (\ref{eq:fp-equiv-forward}).
This also means there is a family of plug-in reverse SDEs parameterized by $\lambda$ and $\sc$:
\begin{align}
    \mathrm{d} \vx 
    &= (g_\lambda^2 \sc - f_\lambda) \dt + \sigma_\lambda \,\mathrm{d}B_t \\
    &= \left(\left(1-\frac{\lambda}{2}\right)g^2 \sc - f\right) \dt + \sqrt{1-\lambda} g \,\mathrm{d}B_t
    \label{eq:plug-in-reverse-family}
\end{align}
The plug-in reverse SDE used by \citet{song2021score} corresponds to $\lambda=0$, and the equivalent (plug-in) reverse ODE corresponds to $\lambda=1$.
See Figure~\ref{fig:equiv-plug-in} for the simulation. 

\begin{figure}
    \centering
    \includegraphics[width=.94\linewidth,left]{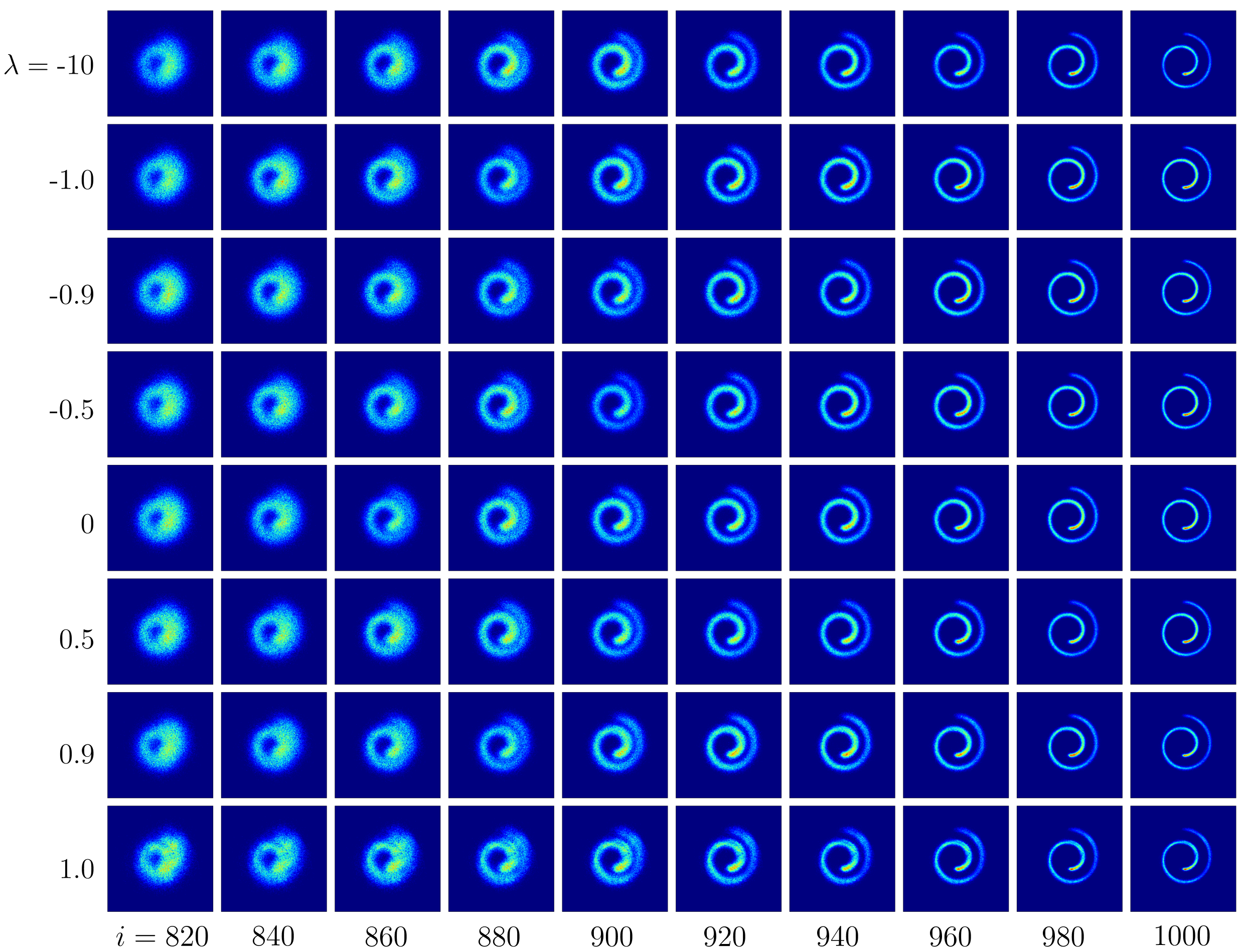}
    \caption{
    Samples from plug-in reverse SDEs with different $\lambda$ values (rows). 
    }
    \label{fig:equiv-plug-in}
\end{figure}

\section{Score matching and plug-in reverse SDEs}
\label{app:score-plug-in-likelihood}
In Section \ref{sec:score} we establish the connection between the score matching loss and the CT-ELBO of the plug-in reverse SDE for $\lambda=0$.
If we want to do the same for different values of $\lambda$, we need to make sure the generative and inference SDEs have the same diffusion coefficient (this is to make sure the Radon-Nikodym derivative is finite). 
In light of this, we define the following generative and inference pair
\begin{align}
    \resizebox{0.99\textwidth}{!}{%
    $\mathrm{d} \vx =
    \left(\left(1-\frac{\lambda}{2}\right)g^2 \sc - f\right) \dt + \sqrt{1-\lambda} g \,\mathrm{d}B_t
    \,\,\text{ and }\,\, 
    \mathrm{d} \vy = \left(f - \frac{\lambda}{2} g^2 \nabla \log q\right) \ds + \sqrt{1-\lambda} g \,\mathrm{d}\hat{B}_s
    $}
    \tag{\ref{eq:lambda-sde-pair}}
\end{align}
Note that this is just the same equivalent SDE and equivalent (plug-in) reverse SDE from the Appendix \ref{app:equiv}.
We show that maximizing the ELBO of this family of plug-in reverse SDEs is also equivalent to performing score matching. 

\begin{restatable}[\bf Plug-in reverse SDE ELBO]{thm}{elboscore}
Assume the generative and inference SDEs follow (\ref{eq:lambda-sde-pair}). 
For $\lambda<1$, then the CT-ELBO (denoted by $\gE^\infty_\lambda$) can be written as
\newline
\resizebox{.99 \textwidth}{!} 
{
    \begin{minipage}{\linewidth}
    \begin{align*}
    \gE^\infty_\lambda 
    = \E_{\vy_T}[\log p_0(\vy_T)\,|\,\vy_0=x] - \int_0^T 
    \left(1-\frac{\lambda}{2}\right)\,&\E_{\vy_s}\left[ \frac{1}{2} ||\sc||_{g^2}^2 + \nabla \cdot \left(g^2 \sc - \left(\frac{2}{2-\lambda}\right) f\right) \,\bigg|\, \vy_0=x \right] \nonumber\\
    + \frac{\lambda}{2}\, &\E_{\vy_s}\left[ \frac{1}{2}||\sc||_{g^2}^2 - g^2 \sc^\top \nabla \log q(\vy_s, s) \,\bigg|\, \vy_0=x \right] \nonumber\\
    + \frac{\lambda^2}{4(1-\lambda)}\, &\E_{\vy_s}\left[ \frac{1}{2}||\sc - \nabla \log q(\vy_s, s)||_{g^2}^2 \,\bigg|\, \vy_0=x \right] 
    \ds 
    \end{align*}
    \end{minipage}
}

As a result, averaging the ELBO over the data distribution and applying the identity (\ref{eq:sm-identity}) yield
\begin{align}
    \E_{\vy_0}[\gE^\infty_\lambda]
    &= \E_{\vy_T}[\log p_0(\vy_T)] - \int_0^T 
    \left(1+\frac{\lambda^2}{4(1-\lambda)}\right)\,\E_{\vy_s}\left[ \frac{1}{2} ||\sc||_{g^2}^2 + \nabla \cdot (g^2 \sc) \right]\ds + \textnormal{Const.} 
    \label{eq:continuous-elbo-ism-lambda-averaged}
    \\
    &= \E_{\vy_0}[\gE^\infty_0] - \left(\frac{\lambda^2}{4(1-\lambda)}\right) \int_{0}^T \E_{\vy_s}\left[\frac{1}{2}||\sc(\vy_s, s) - \nabla \log q(\vy_s, s)||_{g^2}^2\right] \ds
    \label{eq:continuous-elbo-ism-lambda-averaged-refactored}
\end{align}
\end{restatable}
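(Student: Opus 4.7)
The plan is to instantiate the CT-ELBO from the earlier theorem in Section~\ref{sec:inference} with the specific generative/inference pair (\ref{eq:lambda-sde-pair}), carry out the algebra, and then apply the score matching identity (\ref{eq:sm-identity}) after taking expectation over $\vy_0$. The first step is to read off the Girsanov drift $a$: matching the inference SDE in (\ref{eq:lambda-sde-pair}) with the canonical form $\mathrm{d}\vy = (-\mu+\sigma a)\ds + \sigma\,\mathrm{d}\hat B_s$, where $\mu = (1-\lambda/2) g^2\sc - f$ and $\sigma = \sqrt{1-\lambda}\,g$, forces $\sqrt{1-\lambda}\,g\, a = (1-\tfrac{\lambda}{2})g^2 \sc - \tfrac{\lambda}{2} g^2 \nabla\log q$.

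The second step is to expand $\tfrac12\|a\|^2$. The key algebraic trick is the rewriting
$(1-\tfrac{\lambda}{2})\sc - \tfrac{\lambda}{2}\nabla\log q = (1-\lambda)\sc + \tfrac{\lambda}{2}(\sc - \nabla\log q)$,
which makes the cross-term collapse cleanly. Squaring and dividing by $(1-\lambda)$ yields
$\tfrac{1}{2}\|a\|^2 = \tfrac{1-\lambda}{2}\|\sc\|_{g^2}^2 + \tfrac{\lambda}{2}\sc^\top g^2(\sc - \nabla\log q) + \tfrac{\lambda^2}{8(1-\lambda)}\|\sc-\nabla\log q\|_{g^2}^2$.
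Adding $\nabla\cdot\mu = (1-\tfrac{\lambda}{2})\nabla\cdot(g^2 \sc) - \nabla\cdot f$ and grouping by the weights $(1-\tfrac{\lambda}{2})$, $\tfrac{\lambda}{2}$, and $\tfrac{\lambda^2}{4(1-\lambda)}$ gives the three-bracket structure in the theorem statement; the identity $(1-\tfrac{\lambda}{2})\cdot\tfrac{2}{2-\lambda} = 1$ is what lets $\nabla\cdot f$ sit inside only the first bracket with coefficient $1$, which is the source of the $\tfrac{2}{2-\lambda}$ factor there.

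For the averaged formulae, take $\E_{\vy_0}$ on both sides. The first bracket becomes $\E[\ism]$ (up to the $\nabla\cdot f$ term, which does not depend on the parameters of $\sc$); the second bracket equals $\E[\esm]-\tfrac12 \gI(q)$ by the quadratic expansion in Appendix~\ref{app:score-matching}; the third is $\E[\esm]$ directly. Using $\E[\ism] = \E[\esm]-\tfrac12\gI(q)$, the coefficients of $\E[\esm]$ add to $1 + \tfrac{\lambda^2}{4(1-\lambda)}$ and the residual terms are $\theta$-independent. Rewriting once back as $\E[\ism]$ yields (\ref{eq:continuous-elbo-ism-lambda-averaged}); subtracting the $\lambda=0$ case (and noting that $\E[\ism]$ and $\E[\esm]$ differ by the same constant $\tfrac12\gI(q)$ so the difference is exactly $-\tfrac{\lambda^2}{4(1-\lambda)}\E[\esm]$) gives (\ref{eq:continuous-elbo-ism-lambda-averaged-refactored}).

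The main obstacle is the bookkeeping in the last two steps: tracking which pieces are data-dependent versus data-independent constants, so they cleanly absorb into ``Const.'' of (\ref{eq:continuous-elbo-ism-lambda-averaged}), and verifying that the $(\lambda/2)$- and $\lambda^2/(4(1-\lambda))$-brackets contribute exactly the right shifts of $\gI(q)$ so that every instance of $\E[\esm]$ in the sum collapses to the single coefficient $1+\lambda^2/(4(1-\lambda))$. Everything else is routine expansion, made manageable by the orthogonal-looking decomposition identified in Step~2.
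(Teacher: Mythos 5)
Your proposal is correct and follows essentially the same route as the paper's proof: the same identification of the Girsanov drift $a$ from the $\lambda$-inference SDE, the same decomposition $(1-\tfrac{\lambda}{2})\sc-\tfrac{\lambda}{2}\nabla\log q=(1-\lambda)\sc+\tfrac{\lambda}{2}(\sc-\nabla\log q)$ to expand $\tfrac12\|a\|^2$ into the three brackets, and the same use of the integration-by-parts score-matching identity (phrased via ESM/ISM conversions rather than the paper's direct $\E[g^2\sc^\top\nabla\log q]=-\E[\nabla\cdot(g^2\sc)]$, which is the same fact) to obtain the averaged formulae.
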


Before proving this theorem, we first make a few remarks. 
First, setting $\lambda=0$, this ELBO will reduce to (\ref{eq:continuous-elbo-ism}). 
Second, (\ref{eq:continuous-elbo-ism-lambda-averaged}) tells us that while matching the score, we implicitly maximize the likelihood of the entire family of plug-in reverse SDEs. 
Third, (\ref{eq:continuous-elbo-ism-lambda-averaged-refactored}) tells us that the average CT-ELBO is maximized when $\lambda=0$ (recall Figure \ref{fig:equiv-plug-in-lambda-elbo}). 
Lastly, the theorem excludes the case where $\lambda=1$, \ie the equivalent ODE, since otherwise there will be a division-by-zero problem.  
But an ODE can be seen as having $\lambda$ very close to $1$, which will make the SDE effectively deterministic in practice. 
This explains the low BPD of the equivalent plug-in ODE reported in \citet{song2021score}.

\begin{proof}
Plugging (\ref{eq:lambda-sde-pair}) in (\ref{eq:gen-sde}) and (\ref{eq:fk-fp-dynamic-reparameterized}), we get
\begin{align*}
    \mu &= \left(1-\frac{\lambda}{2}\right)g^2 \sc - f \\
    \sigma &= \sqrt{1- \lambda}g \\
    a
    &= \frac{1}{\sqrt{1-\lambda}}\left[\left(1- \lambda\right)g\sc + \frac{\lambda}{2} g \left( \sc - \nabla \log q \right)\right] 
\end{align*}
Then we have
\begin{align*}
    \frac{1}{2}{||} a {||}_{2}^2
    &= \frac{1}{2(1-\lambda)} \left[ \left( 1 - \lambda \right)^2 {||} \sc {||}_{g^2}^2 + \left( 1 - \lambda \right) \lambda g^2 \sc^\top ( \sc - \nabla \log q ) + \frac{\lambda^2}{4} {||} \sc - \nabla \log q {||}_{g^2}^2 \right] \\
    &= \left( 1 - \frac{\lambda}{2}\right) \frac{1}{2}{||}\sc{||}_{g^2}^2 + \frac{\lambda}{2} \left( \frac{1}{2}{||}\sc{||}_{g^2}^2 - g^2 \sc^\top \nabla \log q \right) + \frac{\lambda^2}{4(1-\lambda)}\frac{1}{2}{||}\sc - \nabla \log q{||}_{g^2}^2 %
\end{align*}
\begin{gather*}
    \nabla \cdot \mu 
    =  \left(1 - \frac{\lambda}{2} \right) \nabla \cdot \left( g^2 \sc - \left(\frac{2}{2-\lambda}\right)f\right)
\end{gather*}
Summing up these two parts gives us $\gE_\lambda^\infty$.
Under the expectation, we can rewrite $\E_{\vy_s}[ g^2 \sc^\top \nabla \log q ] = - \E_{\vy_s}[ \nabla \cdot (g^2\sc)]$ using the score matching loss identity (see Appendix \ref{app:score-matching}), to obtain the second part of the statement. 
\end{proof}

\section{Non-uniform sampling for debiasing}
\label{app:non-uniform}
We perform non-uniform sampling to debias the denoising score matching loss weighted by $\sigma_s^2/g^2$, as discussed in subsection \ref{subsec:bias-variance}. 
We experiment with the variance-preserving SDE from \citet{song2021score} (originally from \citet{ho2020denoising}), whose drift and diffusion coefficients are
\begin{align}
    f(y,s) &= -\frac{1}{2} \beta(s) y \\
    g(y,s) &= g(s) = \sqrt{\beta(s)} %
\end{align}
where $\beta(s) = (\beta_{\max} - \beta_{\min}) s + \beta_{\min}$, for some constants $\beta_{\max}$ and $\beta_{\min}$. 

Solving the Fokker Planck of this SDE with a Dirac point mass as initial condition gives us a conditional Gaussian, whose variance is
\begin{align}
    \sigma_s^2 := \int_0^s g^2(s') ds' = \frac{1}{2}s^2(\beta_{\max} - \beta_{\min}) + s\beta_{\min} %
\end{align}

Our goal is to sample from a density function proposal to $g^2/\sigma_s^2$ for most of the part.
So for some small $s_\epsilon>0$, we define the following unnormalized density
\begin{align}
    \tilde{q}_\epsilon(s) = 
    \begin{cases}
    \frac{g^2(s_\epsilon)}{\sigma_{s_\epsilon}^2} &s \in [0, s_\epsilon) \\
    \frac{g^2(s)}{\sigma_{s}^2} &s \in [s_\epsilon, T]
    \end{cases} %
\end{align}

To simplify our notation, we let
\begin{gather}
    \phi(s) := \log \left( \exp\left({\frac{1}{2}s^2(\beta_{\max}-\beta_{\min})  s\beta_{\min}}\right) - 1\right) \\
    \varphi(u) := \log \left( 1 + \exp \left( Zu + \phi(s_\epsilon) - \frac{g^2(s_\epsilon)}{\sigma_{s_\epsilon}^2}s_\epsilon \right) \right) \\
    \tilde{\Phi}_\epsilon(s) := \begin{cases}
    \frac{g^2(s_\epsilon)}{\sigma_{s_\epsilon}^2} s &s \in [0, s_\epsilon) \\
    \frac{g^2(s_\epsilon)}{\sigma_{s_\epsilon}^2} s_\epsilon + \phi(s) - \phi(s_\epsilon) &s \in [s_\epsilon, T] %
    \end{cases} %
\end{gather}
where $\tilde{\Phi}_\epsilon$ is the cumulative function of the unnormalzied density. 
Evaluating it at $T$ gives us the normalizing constant $Z = \tilde{\Phi}_\epsilon(T)$, from which we obtain the CDF, $\Phi_\epsilon(s) = \frac{\tilde{\Phi}_\epsilon(s)}{Z}$, the pdf $q_{\epsilon}(s) = \frac{\tilde{q}_\epsilon(s)}{Z}$, and the inverse CDF that we need for sampling (using the inverse CDF transform):
\begin{align}
    \Phi_\epsilon^{-1}(u) = \begin{cases}
    \vspace{1mm}
    Z\frac{\sigma^{2}(s_\epsilon)}{g^{2}(s_\epsilon)}u & u \in \left[0, s_\epsilon \frac{g^2(s_\epsilon)}{Z\sigma_{s_\epsilon}^2} \right) \\
    \frac{1}{\beta_{\max}-\beta_{\min}} \left( -\beta_{\min} + \sqrt{\beta_{\min}^2 + 2\left(\beta_{\max}-\beta_{\min}\right)\varphi(u)}\right) & u \in \left[s_\epsilon \frac{g^2(s_\epsilon)}{Z\sigma_{s_\epsilon}^2}, 1\right]
    \end{cases}
\end{align}

\section{Experiments}
\subsection{MNIST and CIFAR 10}
We use the variance preserving SDE described in Appendix \ref{app:non-uniform}, with $\beta_{\min} = 0.1$, $\beta_{\max} = 20$, and $T=1$. 
We use the same architecture following \cite{ho2020denoising} for the CIFAR10 experiment (which is a modified U-Net \citep{ronneberger2015u}).
For MNIST, we use 3 feature map resolutions (instead of 4) and reduce the number of channels from 128 to 32. 
Also we did not apply dropout. 

For optimization, we use the Adam optimizer with a learning rate of 0.0001. 
We use minibatch size 128 for all experiments.
We apply the standard uniform dequantization, and map the data to the real space using the logit transform (with a squeeze coefficient $\alpha=0.05$ to avoid numerical instability). 
For CIFAR10, we additionally apply random horizontal flipping for regularization. 

More details can be found in \url{https://github.com/CW-Huang/sdeflow-light}.

\end{document}